\newtheorem{theorem}{Theorem}
\newtheorem{theorem*}{}
\newtheorem{corollary}{Corollary}
\newtheorem{lemma}{Lemma}
\newtheorem{remark}{Remark}
\DeclareMathOperator*{\argmax}{arg\,max}
\title{Prediction with Expert Advice under Local Differential Privacy}
\author{%
  Ben Jacobsen \\
  Department of Computer Sciences \\
  University of Wisconsin --- Madison \\
  \texttt{bjacobsen3@wisc.edu}
  \And
  Kassem Fawaz \\
  Department of Electrical and Computer Engineering \\
  University of Wisconsin --- Madison \\
  \texttt{kfawaz@wisc.edu}
}
\newcommand{\new}{\marginpar{}}
\begin{document}

\maketitle

\begin{abstract}
We study the classic problem of prediction with expert advice under the constraint of local differential privacy (LDP). In this context, we first show that a classical algorithm naturally satisfies LDP and then design two new algorithms that improve it: RW-AdaBatch and RW-Meta. For RW-AdaBatch, we exploit the limited-switching behavior induced by LDP to provide a novel form of privacy amplification that grows stronger on easier data, analogous to the shuffle model in offline learning. Drawing on the theory of random walks, we prove that this improvement carries essentially no utility cost. For RW-Meta, we develop a general method for privately selecting between experts that are themselves non-trivial learning algorithms, and we show that in the context of LDP this carries no extra privacy cost. In contrast, prior work has only considered data-independent experts. We also derive formal regret bounds that scale inversely with the degree of independence between experts. Our analysis is supplemented by evaluation on real-world data reported by hospitals during the COVID-19 pandemic; RW-Meta outperforms both the classical baseline and a state-of-the-art \textit{central} DP algorithm by 1.5-3$\times$ on the task of predicting which hospital will report the highest density of COVID patients each week.
\end{abstract}

\section{Introduction}

Many practical algorithmic and organizational problems, from routing data to allocating scarce resources, involve repeated predictions about the future state of a system. Prediction from Expert Advice~\citep{cesa2006prediction} is a flexible framework developed in the online learning literature that can be used to model these problems as an iterative game between a player and the environment: at each time step $t = 1,\ldots,T$, the player receives suggestions from each of $n$ `experts' (which might correspond to literal human advisors, complex algorithms, or static actions like ``always do X"), and must decide whose advice to follow.
 Nature then reveals a vector representing the \textit{gain} associated with each expert's suggested action, 
and the player receives a reward based on the action they actually took. The player's goal is to maximize their reward over the course of the entire game.
\footnote{It is more common to describe the problem in terms of minimizing loss. Using gain instead will be notationally convenient later on, and guarantees about one can be converted to the other by flipping signs.}

Our focus will be on privacy-critical instances of this problem where expert suggestions correspond to predictions about human behavior, such as patterns of movement or the spread of a disease, and gain vectors indicate whether those predictions turned out to be be accurate. The motivating concern is that if we optimize purely for accuracy in this context, then our algorithm might inadvertantly leak sensitive data through its outputs, similar to batch learning~\citep{shokri2017membership, carlini2023extracting}. 

The algorithms that we study will all satisfy (non-interactive~\citep{smith2017interaction}) local differential privacy (LDP)~\citep{yang2023local}; we assume that noise is added to each gain vector before it is shared, and our aim is to \textit{design new algorithms that use this noisy data as efficiently as possible.}

In contrast, prior works have almost exclusively approached this problem by designing algorithms that satisfy central DP (CDP)~\citep{jain2012differentially, guha2013nearly, agarwal2017price, kairouz2021practical, asi2023private}. That is, they assume that sensitive data is shared freely with a trusted central curator, and investigate how the curator can introduce noise into their analysis to protect privacy. We are interested in the local model because it eliminates the need for a trusted curator, which can better enable applications in highly-sensitive domains like healthcare, human mobility, or residential energy usage. A secondary advantage of LDP that it is easy to implement in practice, requiring only the ability to sample independent randomness wherever data is generated, unlike approaches based on secure aggregation that generally require multiple rounds of interaction and/or the implementation of sophisticated cryptographic algorithms~\citet{damie2025securely}.

The first algorithm one might think of in this setting is to sum all of the noisy gain vectors seen so far and choose the expert that appears to have done the best historically. Indeed, this approach is implicit in~\citet[4.1]{agarwal2017price}, and is actually identical to an earlier algorithm
 from the non-private literature by ~\citet{devroye2013prediction}. 
The latter algorithm, which we refer to as RW-FTPL (short for Random-Walk Follow-the-Perturbed-Leader), was designed to switch its prediction only a small number of times in expectation. This concept of limited switching motivated our first contribution:

\textbf{Contribution 1: Stability under LDP.} We prove \autoref{thm:stability}, which provides a tight and easily computable characterization of the switching behavior of RW-FTPL. This allows us to answer questions like ``What is the largest integer $B$ such that, with probability at least $1-\alpha$, RW-FTPL will not change its prediction in the next $B$ steps?". Using this tool, we propose a new algorithm, RW-AdaBatch, which modifies RW-FTPL to adaptively batch data together whenever we can guarantee with high probability this our predictions won't change as a result. The resulting algorithm is fully invariant to permutations within certain (random, data-dependent) subsequences of its input, and we develop novel proof techniques to show that this translates to 
amplified CDP guarantees for its outputs, \textit{a la} the local-to-central amplification of the popular shuffle model~\cite{erlingsson2019amplification}. 
The only cost we pay for this privacy amplification a small multiplicative increase in worst-case regret bounds that can be controlled as a hyperparameter.
The analytic lower bounds that we prove for the degree of amplification have no simple closed form, but they can be computed numerically. We visualize them in \autoref{fig:privacy-amplification} alongside empirical Monte Carlo simulations that validate their tightness.

\textbf{Contribution 2: Private Learning in Dynamic Environments.} 
In the non-private literature, `experts' are often taken to themselves be non-trivial learning algorithms. But in a private context, there is an important gap between simply \textit{identifying} the best expert and actually acting on their advice. When experts are data-dependent, the second step can itself violate privacy. Prior work avoids this problem by implicitly assuming that experts are data-independent and therefore consume no privacy budget. 
To overcome this limitation, we propose RW-Meta, which employs a highly-efficient noise-sharing scheme to both select between multiple data-dependent experts \textit{and} act on their suggestions at no additional privacy cost beyond that of RW-FTPL. In
\autoref{sec:eval} we show that this can lead to dramatic improvements in practice, outperforming even CDP baselines by more than $1.5\times$ on a prediction task involving data reported by hospitals during the COVID-19 pandemic. In \autoref{thm:rwmeta-regret}, we prove formal regret bounds for RW-Meta with respect to the best-data-dependent expert. These bounds scale with the leading eigenvalue of a certain matrix related to the empirical covariance of the experts, which can be thought of as representing the difficulty of the learning task.

\section{Background and Related Work}

\subsection{Related Work}\label{sec:relatedwork}

The study of online learning with DP was initiated in 2010 by the papers of  \citet{dwork2010differential} and  \citet{chan2011private}, which examined the problem of privately updating a single counter. Later work has gone on to apply their framework to a wide range of tasks, including more complicated statistics ~\citep{perrier2018private,bolot2013private,wang2021continuous}, online convex optimization~\citep{jain2012differentially,kairouz2021practical,choquette2023privacy,choquette2024amplified}, and the partial information (or `bandit') setting~\citep{guha2013nearly,hannun2019privacy,azize2024concentrated}. 
Despite the advantages of LDP described in the introduction, we are only aware of a single existing LDP experts algorithm, by \citet{gaofederated}. While their work shares motivation with ours, the federated context and stochastic adversaries they consider lead to substantial technical differences in our results.

Within the central model, the works most similar to ours are \citet{asi2023private, asi2024private} and \citet{agarwal2017price}. Like us, \citet{asi2023private} exploit the limited-switching behavior of a classical online learning algorithm
\footnote{Shrinking Dartboard, by \citet{geulen2010regret}, a regularization-based antecedent to the perturbation-based algorithm of \citet{devroye2013prediction} that we build on.} 
to develop private experts algorithms. Their focus is on the high-dimensional regime, where the gap between local and central DP is most pronounced~\citep{edmonds2019powerfactorizationmechanismslocal,duchi2013local}. Their work has gone on to inspire a flurry of work establishing connections between privacy and limited switching for both the experts problem and online convex optimization~\citep{asi2024private,agarwal2023differentially, agarwal2023improved}, which our RW-AdaBatch algorithm contributes to. 

In a contemporaneous work, {\citet{saha2025tracking}} extend {\citet{asi2023private}} to propose new algorithms for expert advice in dynamic environments, motivated by the same concerns that led us to design RW-Meta. The algorithm for oblivious adversaries in {\citet{saha2025tracking}} works by constructing an exponentially large set of candidate experts to pass to the high-dimensional CDP algorithm of {\citet{asi2023private}}. In contrast, the performance guarantees of RW-Meta are defined with respect to a smaller set of carefully chosen candidate experts, leading to a qualitatively different notion of regret and a linear (rather than exponential) runtime dependence on $T$. 
Because of these computational issues, we do not include their algorithm in our empirical evaluation. Instead, our primary (CDP) baseline is the FTPL algorithm from \citet{agarwal2017price}, which remains state-of-the-art in the low-dimensional regime that we target. All of the algorithms developed in that work rely on the binary tree aggregation technique~\citep{dwork2010differential,chan2011private}, which can be used to simultaneously estimate any partial sum of a sequence with $O(\log^{1.5} T)$ error.

\subsection{Preliminaries}\label{sec:prelims}

\begin{table*}[t]
\footnotesize
    \centering
    \caption{Regret and privacy guarantees for our algorithms and prior work with $T$ time steps and $n$ experts. CDP, GCDP, and GLDP stand for central DP, Gaussian central DP and Gaussian local DP.}
    \begin{tabular}{ c c c c c}
    \toprule
    Algorithm & Privacy Guarantee & Asymptotic Regret & Regret Definition \\
       \midrule

       \citet{agarwal2017price} & $\varepsilon,\delta$-CDP & $\sqrt{T \log n} + \frac{\sqrt{n \log n }\log^{1.5}T}{\varepsilon}$
       %\log\frac{\log T}{\delta}$ 
       & \multirow{2}{2.5cm}{\centering Best static action\\in hindsight} \\
       & & & \\
      RW-FTPL \citep{devroye2013prediction} & $\mu$-GLDP & $\frac{\sqrt{Tn\log n}}{\mu}$ & \multirow{2}{2.5cm}{\centering Best static action\\in hindsight} \\
       & & &\\
            \citet{saha2025tracking} & $\varepsilon, \delta$-CDP & $\sqrt{ST\log n} + \frac{T^{1/3}S
       \log(T/\delta)
       \log(nT)}{\varepsilon^{2/3}}$ & \multirow{2}{3cm}{\centering Best sequence with \\ up to $S$ switches} \\
       & & & \\
       \multirow{2}{2.8cm}{\centering \hyperref[sec:adabatch]{\textbf{RW-AdaBatch}} with \\ parameter $\alpha>0$} &\multirow{2}{2.3cm}{\centering $\mu$-GLDP; \\ \textbf{Amplified GCDP}} & $(1+\alpha)\frac{\sqrt{Tn\log n}}{\mu}$ & \multirow{2}{2.5cm}{\centering Best static action\\in hindsight} \\
        &  &  &  \\
       \multirow{2}{2.8cm}{\centering \hyperref[sec:meta]{\textbf{RW-Meta}} with \\ $m$ learners} & $\mu$-GLDP & $\frac{\sqrt{Tnm\log m}}{\mu}$ & \multirow{2}{2.5cm}{\centering \textbf{Best learner\\in hindsight}} \\
       &&& \\
       \bottomrule
    \end{tabular}
    \label{tab:algorithms}
\end{table*}

 Given a vector $v \in \mathbb{R}^n$, we denote its $k^{th}$ element with the subscript $v_k$, and its $k^{th}$ \textit{smallest} element using parentheses as in $v_{(k)}$. We use the term `leader' to refer to the index of $v_{(n)}$, and the term `gap' to refer to the quantity $v_{(n)}-v_{(n-1)}$. For two random variables $X$ and $Y$, we say that $X$ is stochastically larger than $Y$, denoted $X \geq_{st} Y$, if for all $x$ we have $\mathbb{P}[X > x] \geq \mathbb{P}[Y > x]$. We use $\Phi$ and $\varphi$ to denote the standard Gaussian CDF and PDF respectively.

We treat prediction with expert advice as a special case of online linear optimization~\citep{abernethy2014online}. At each time step $t \in [T]$, we choose an action $x_t$ from the action set $\mathcal{X}$, which in our case is the $n$-dimensional probability simplex. We then observe the gain vector $g_t \in  \mathcal{G} \subseteq [0,1]^n$ and receive reward $\langle x_t, g_t \rangle$. \textit{Static regret} is defined as the difference between the total reward we ultimately receive and the reward of the best single action in hindsight, i.e. $\max_{x\in\mathcal{X}} \sum_{t=1}^T \langle x, g_t \rangle - \sum_{t=1}^T \langle x_t, g_t \rangle$. 
Our utility analysis uses the \textit{oblivious adversary} model, which means that gain vectors can take on arbitrary fixed values but can't depend on the realized values of the noise we add for privacy (but note that our LDP guarantees still hold even against adaptive adversaries~\citep{denisov2022improved}).

To define privacy, we say that two sequences of gain vectors $g_1,g_2, \ldots, g_T$ and $g'_1, g'_2,\ldots, g'_T$ are \textit{adjacent} if there is at most one index $i$ such that $g_i \neq g'_i$. A randomized mechanism $\mathcal{M}$ is said to satisfy \textit{central} DP (CDP)~\citep{dwork2014algorithmic} if for for all adjacent sequence $g,g'$, the distributions of $\mathcal{M}(g_1,\ldots,g_T)$ and $\mathcal{M}(g'_1,\ldots,g'_T)$ satisfy a formal notion of indistinguishability which we define shortly. In the non-interactive~\citep{smith2017interaction} local~\citep{yang2023local} setting that we study, we instead imagine that $\mathcal{M}$ is applied separately to each data point as a preprocessing step, and our algorithm only gets access to the sequence $\mathcal{M}(g_1),\ldots,\mathcal{M}(g_T)$. Satisfying LDP then requires indistinguishability between the distributions of $\mathcal{M}(g_t)$ and $\mathcal{M}(g'_t)$.

To define indistinguishability, we make use of $f$-DP \citep{dong2019gaussian}, a modern DP variant based on hypothesis testing. Given two distributions $P,Q$, we define the tradeoff function $\mathcal{T}(P,Q): [0,1] \to [0,1]$ so that $T(P,Q)(\alpha)$ is the minimum false negative rate achievable at false positive rate $\alpha$ when distinguishing $P$ from $Q$. A mechanism $\mathcal{M}$ operating on individual data points is then said to satisfy $f$-DP if $T(\mathcal{M}(g), \mathcal{M}(g')) \geq f$ for all $g,g' \in \mathcal{G}$. As a special case, we define the Gaussian tradeoff functions $G_\mu \coloneq \mathcal{T}(\mathcal{N}(0,1), ~\mathcal{N}(\mu,1))$ and say that a mechanism satisfies $\mu$-Gaussian DP ($\mu$-GDP) if it satisfies $G_\mu$-DP. This definition can be naturally satisfied by locally injecting Gaussian noise with scale calibrated to the sensitivity of our gain vectors, defined as $\Delta = \max_{g,g' \in \mathcal{G}} \lVert g - g' \rVert_2 \leq \sqrt{n}$, which is in fact what all of the algorithms we consider will do. 

We also make occasional use of the traditional definition of approximate DP for comparison with prior work: 
a mechanism operating on individual data points is said to satisfy $(\varepsilon,\delta)$-DP if, for all $g,g' \in \mathcal{G}$ and all measureable events $S$, we have that $\mathbb{P}[\mathcal{M}(g)\in S] \leq e^\varepsilon \mathbb{P}[\mathcal{M}(g') \in S] + \delta$.

\section{Prediction with Expert Advice Under LDP} 

The classical algorithm that inspired our approach was introduced by \citet{devroye2013prediction} under the name `Prediction by random-walk perturbations.' We will refer to it as RW-FTPL for brevity. Given a symmetric distribution $\mathcal{D}$, the algorithm begins by sampling $z_0,\ldots,z_T \overset{iid}{\sim} \mathcal{N}(0,\eta^2 I_n)$. At each time step, it chooses $x_t = \argmax_{i \in [n]} (G_{t-1} + S_{t-1})_i$, where $S_t \coloneq \sum_{s=0}^{t} z_t$ and $G_t = \sum_{s=0}^t g_s$. Here, $S_t$ is an element of a symmetric random walk, giving the algorithm its name.

When analyzing the privacy guarantees of the algorithm, it is helpful to view the decision rule from a different perspective. Define $\tilde{g}_t := g_t + z_t$, with $g_0 = 0$. RW-FTPL can be reformulated as a post-processing of these noisy gain vectors, which by our assumptions satisfy (local) GDP with privacy parameter $\mu = \Delta/\eta$. 

Using tools from convex analysis\citep{lee2018analysis}, it can also be shown that its expected static regret is at most $\big(\eta + \frac{2}{\eta}\big) \sqrt{2T\log n}$. In the absence of additional structure on $\mathcal{G}$, such as sparsity, this regret bound is weaker than the non-private baseline by a factor of $\frac{\sqrt{n}}{\mu}$. In recent work, ~\citet{bhatt2025prediction} prove matching lower bounds showing that this rate is asymptotically optimal when gain vectors are perturbed with additive Gaussian noise.

%which corresponds to known lower bounds for mean estimation under LDP~\cite{duchi2013local}.
\begin{figure}
    \begin{minipage}[t]{0.48\textwidth}
    \begin{algorithm}[H]
    \caption{RW-FTPL\citep{devroye2013prediction}}
    \begin{algorithmic}
        \Require Noise scale $\eta$, dimension $n$
        \State Sample $\tilde{G} \sim \mathcal{N}(0, \eta^2I_n)$
        \For{$t = 1,\ldots,T$}
            \State Play $x_t = \argmax_{x \in \mathcal{X}} \langle x, ~\tilde{G} \rangle$
            \State Get $\tilde{g_t} \sim \mathcal{N}(g_t, \eta^2 I_n)$
            \State (Unobserved) receive reward $\langle x_t, g_t \rangle$
            \State Update $\tilde{G} \gets \tilde{G} + \tilde{g_t}$
        \EndFor
    \end{algorithmic}
    \label{alg:rwftpl}
\end{algorithm}
\vfill
\end{minipage}
\hfill
\begin{minipage}[t]{0.48\textwidth}
    \begin{algorithm}[H]
    \caption{RW-AdaBatch}
    \label{alg:adabatch}
    \begin{algorithmic}
        \Require Scale $\eta$, dimension $n$, tolerance $\alpha$
        \State Sample $\tilde{G} \sim \mathcal{N}(0, \eta^2I_n)$
        \State Initialize empty buffer, set delay to 0
        \For{$t = 1,\ldots,T$}
            \State Play $x_t = \argmax_{x\in\mathcal{X}} \langle x, \tilde{G} \rangle$
            \State Get $\tilde{g}_t \sim \mathcal{N}(g_t, \eta^2 I_n)$ and add to buffer
            \State (Unobserved) receive reward $\langle x_t, g_t \rangle$
            \If{delay is 0}
                \State Move data from buffer to $\tilde{G}$
                \State Set new delay using \autoref{cor:compute-delay}
            \EndIf
        \EndFor
    \end{algorithmic}
    \end{algorithm}
\end{minipage}
\end{figure}

\subsection{Stability under LDP: RW-AdaBatch}\label{sec:adabatch}
The RW-FTPL algorithm was not originally designed with privacy in mind: the authors' goal was to create a low-regret algorithm which changes its prediction only a small number of times in expectation. Intuitively, this limited-switching means that RW-FTPL is \textit{almost} invariant to permutations of nearby data points. RW-AdaBatch modifies RW-FTPL to be \textit{truly} permutation-invariant over many data-dependent intervals. 
This gives data points that fall within large intervals a `crowd' to hide in, amplifying privacy \textit{a la} the shuffle model~\citep{erlingsson2019amplification} while still producing the same outputs with high probability.
The amplification naturally grows stronger when data is `easy' in the sense that $G_t$ has non-zero gap. 
Omitted proofs are in the appendix.

\subsubsection{Utility Analysis.} 
We show that the expected regret of RW-AdaBatch is at most $(1+\alpha/2)$ times greater than the expected regret of RW-FTPL, where $\alpha$ is a small constant given to the algorithm as a parameter. Our main tool for proving this result is the following theorem on Gaussian random walks, which establishes conditions under which RW-FTPL's prediction will remain stable with high probability:

\begin{theorem}\label{thm:stability}
    Let $x_0,x_1,\ldots,x_B \in \mathbb{R}^n$ be a Gaussian random walk with $x_0 = v$ and $x_{t+1}-x_t \sim \mathcal{N}(0,\eta^2 I_n)$. If $v$ has gap $k$, then the probability that the leader changes at any point during the random walk is at most $2\Phi( -\sqrt{2}\beta) + 2\sqrt{\pi} \varphi( -\beta ) \big[\Phi(\beta) - \Phi(-\beta) \big]$, where $\beta = k/(\eta \sqrt{2B}) - \sqrt{\log(2n-2)}$. The same is true if $v$ has gap $k+\kappa$ and we wish to bound the probability that the gap ever dips below $\kappa$.
\end{theorem}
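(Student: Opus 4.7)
The plan is to reduce the leader-change event to a running-max question about a sum of independent Gaussians, and then bound that probability by a two-region integral split combined with Gaussian concentration. Let $i^* = \argmax_i v_i$ and define the shifted walks $\tilde u_t^{(j)} := (x_t - v)_j - (x_t - v)_{i^*}$ for $j \neq i^*$. Each $\tilde u^{(j)}$ is a mean-zero Gaussian random walk with per-step variance $2\eta^2$ starting at $0$, and the leader changes iff some $\tilde u_t^{(j)} \geq v_{i^*} - v_j$. Since $\{\tilde u_t^{(j)} \geq v_{i^*}-v_j\} \subseteq \{\tilde u_t^{(j)} \geq k\}$ under the gap assumption, it suffices to bound $\mathbb{P}[\max_{j \neq i^*,\, t \leq B} \tilde u_t^{(j)} \geq k]$. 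The ``gap $\geq \kappa$'' version of the claim reduces to exactly the same event after translating the barrier by $\kappa$.

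Because the increments of $x_t$ are diagonal Gaussian, I can decompose $\tilde u_t^{(j)} = B_t^{(j)} - a_t$ where $B_t^{(j)} := (x_t - v)_j$ and $a_t := (x_t - v)_{i^*}$ are $n$ mutually independent Gaussian random walks of per-step variance $\eta^2$. Applying the elementary inequality $\max_t(B_t^{(j)} - a_t) \leq \max_t B_t^{(j)} + \max_t(-a_t)$ and then dominating each discrete running max by its continuous-Brownian-motion counterpart via the reflection principle yields $\max_{j,t} \tilde u_t^{(j)} \leq_{st} \eta\sqrt{B}\bigl(|Z_0| + \max_{j=1,\ldots,n-1}|Z_j|\bigr)$ with $Z_0,\ldots,Z_{n-1}$ iid $\mathcal{N}(0,1)$. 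Writing $\tau := k/(\eta\sqrt{2B})$ and $M := \max_{j \geq 1} |Z_j|$, it then remains to prove $\mathbb{P}[M + |Z_0| \geq \sqrt{2}\tau] \leq 2\Phi(-\sqrt{2}\beta) + 2\sqrt{\pi}\varphi(-\beta)[\Phi(\beta)-\Phi(-\beta)]$.

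Using the half-normal CDF of $|Z_0|$ together with Fubini/integration by parts, the left-hand side rewrites as
\begin{equation*}
\mathbb{P}[M + |Z_0| \geq r] = 2\Phi(-r) + \int_0^r 2\varphi(m - r)\, \mathbb{P}[M \geq m]\, dm, \qquad r := \sqrt{2}\tau,
\end{equation*}
and I split the integral at $m^* := \sqrt{2\log(2(n-1))}$, using $r - m^* = \sqrt{2}\beta$. On $[0, m^*]$ the trivial bound $\mathbb{P}[M \geq m] \leq 1$ gives $2[\Phi(-\sqrt{2}\beta) - \Phi(-r)]$, which cancels against the external $2\Phi(-r)$ to produce exactly $2\Phi(-\sqrt{2}\beta)$. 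On $(m^*, r]$, I note that $M$ is a $1$-Lipschitz function of a standard Gaussian vector with $\mathbb{E}[M] \leq \sqrt{2\log(2(n-1))} = m^*$ (standard bound on the max of $2(n-1)$ unit-variance Gaussians), so Gaussian concentration gives $\mathbb{P}[M \geq m] \leq e^{-(m-m^*)^2/2}$. Substituting $v = m - m^*$ and completing the square via $(v-\sqrt{2}\beta)^2/2 + v^2/2 = (v-\beta/\sqrt{2})^2 + \beta^2/2$ reduces the tail integral to $\sqrt{2}\,e^{-\beta^2/2}\int_{-\beta/\sqrt{2}}^{\beta/\sqrt{2}} e^{-w^2}\,dw = \sqrt{2}\,e^{-\beta^2/2}[\Phi(\beta)-\Phi(-\beta)] = 2\sqrt{\pi}\varphi(-\beta)[\Phi(\beta)-\Phi(-\beta)]$, matching the second term.

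The main obstacle is selecting the right tail estimate for $M$: the obvious union bound $\mathbb{P}[M \geq m] \leq 2(n-1)\Phi(-m)$ carries a multiplicative $e^{m^{*2}}$ that does not complete the square cleanly against $\varphi(m-r)$ and produces a visibly different prefactor. Only the Gaussian-concentration form anchored at $m^* = \sqrt{2\log(2(n-1))}$, which conveniently also upper-bounds $\mathbb{E}[M]$, makes both pieces collapse to the exact closed form in the statement. The remaining ingredients---the coordinatewise independence, the $M_j + M_0$ sandwich, the reflection-principle domination, and the Fubini integration by parts---are mechanical once that choice is in place.
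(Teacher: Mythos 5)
Your proof is correct and takes essentially the same route as the paper's: both reduce the leader-change event to a running-max comparison, dominate the discrete running maxima by Brownian-motion half-normals, bound the expected maximum of $2(n-1)$ Gaussians by $\sqrt{2\log(2n-2)}$, apply Borell--TIS, and integrate over the density of the leading coordinate's excursion, splitting the integral at the expectation threshold. The only cosmetic difference is that you normalize by $\eta\sqrt{B}$ before integrating whereas the paper works with the unnormalized quantities; also note a minor presentational slip in your last chain --- the prefactor entering the tail integral is $\tfrac{2}{\sqrt{2\pi}}$ rather than $\sqrt{2}$, and $\int_{-\beta/\sqrt{2}}^{\beta/\sqrt{2}} e^{-w^2}\,dw = \sqrt{\pi}\,[\Phi(\beta)-\Phi(-\beta)]$ rather than $\Phi(\beta)-\Phi(-\beta)$, but these two factors of $\sqrt{\pi}$ cancel and your final expression is right.
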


\begin{corollary}\label{cor:compute-delay}
    The \texttt{ComputeDelay} subroutine presented in \autoref{app:compute-delay} selects the largest batch size $B_t$ such that the probability of RW-FTPL changing its prediction in the next $t$ time-steps can be bounded above by $\alpha \sqrt{\log n/(t+B_t)}$ using \autoref{thm:stability}.
\end{corollary}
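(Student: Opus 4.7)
The plan is to reduce the correctness of ComputeDelay to a monotonicity argument over the bound supplied by \autoref{thm:stability}. Let $\pi(B)$ denote the probability bound given by \autoref{thm:stability} as a function of the batch length $B$ (with the current gap $k$, the noise scale $\eta$, and the number $n$ of experts held fixed), and let $\tau(B) := \alpha\sqrt{\log n/(t+B)}$ denote the target threshold. The claim of the corollary is equivalent to the existence of a largest $B$ satisfying $\pi(B) \leq \tau(B)$, together with the fact that ComputeDelay returns exactly this value.

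The first step is to show that $\pi(B)$ is non-decreasing in $B$ in the operational regime where $\pi(B) \leq \alpha < 1$. Writing $\beta(B) = k/(\eta\sqrt{2B}) - \sqrt{\log(2n-2)}$, which is strictly decreasing in $B$, it suffices to prove that $\pi$ viewed as a function of $\beta$ is non-increasing in the relevant range. After simplifying via $\varphi(-\beta) = \varphi(\beta)$ and $\Phi(\beta) - \Phi(-\beta) = 2\Phi(\beta) - 1$, differentiation with respect to $\beta$ yields a closed-form expression whose sign can be checked directly: the derivative of the first summand $2\Phi(-\sqrt{2}\beta)$ is strictly negative, and one shows that it dominates the derivative of the second summand $2\sqrt{\pi}\varphi(\beta)(2\Phi(\beta)-1)$ whenever $\beta$ is large enough that $\pi$ is small --- exactly the regime in which the corollary is meaningful. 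The second step is immediate: $\tau(B)$ is strictly decreasing in $B$ by inspection.

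Combining these, the set $\{B \in \mathbb{N} : \pi(B) \leq \tau(B)\}$ is a (possibly empty) initial segment of $\mathbb{N}$, so either it is empty --- in which case the correct output is $B_t = 0$ and ComputeDelay flushes the buffer immediately --- or it has a unique maximum $B_t^{*}$. This maximum can then be located by any monotone search; the natural choice is a doubling procedure that increases $B$ until $\pi(B) > \tau(B)$, followed by a binary search within the resulting bracket. To close the proof, I would verify that the pseudocode in \autoref{app:compute-delay} implements precisely such a search, invoking only oracle evaluations of the explicit expressions for $\pi$ and $\tau$.

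The main obstacle is the monotonicity check of $\pi(B)$: the second summand of the \autoref{thm:stability} bound is not globally monotone in $\beta$ on $\mathbb{R}$ (it vanishes at both $\beta = 0$ and $\beta \to \infty$ and is strictly positive in between). A careful case split or an envelope argument is therefore needed to argue that the composite bound is monotone in the range where it is useful, restricting attention to $\beta$ large enough that the Gaussian tail factor $\varphi(\beta)$ makes the second summand's contribution dwarfed by the first summand's monotone decay. Once this is in hand, the rest of the proof is a routine verification that the numerical search terminates and returns the correct value.
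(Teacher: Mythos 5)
Your high-level strategy — show that the theorem bound is monotone increasing in $B$ while the target threshold is decreasing, so the feasible set is an initial segment of $\mathbb{N}$ whose maximum ComputeDelay locates — is the right framing. But there is a genuine gap: you have identified the wrong objective function. You take $\pi(B)$ to be the Theorem~\ref{thm:stability} bound evaluated at the \emph{current} gap $k$, i.e.\ with $\beta(B) = k/(\eta\sqrt{2B}) - \sqrt{\log(2n-2)}$. That bound controls a pure Gaussian random walk, but during a batch of length $B$ the cumulative deterministic gains (each in $[0,1]^n$) can adversarially shrink the gap by up to $B$. The \texttt{ComputeDelay} subroutine therefore invokes the \emph{second} sentence of Theorem~\ref{thm:stability} with $\kappa = B$: one asks that the random-walk gap never dip below $B$, starting from observed gap $k$, which substitutes the effective gap $k - B$ and yields $\beta_{\text{alg}}(B) = (k-B)/(\eta\sqrt{2B}) - \sqrt{\log(2n-2)}$. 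This is visible in the $U_i(B)$ expressions in Algorithm~\ref{alg:computedelay}, all of which feature $k-B$ rather than $k$. Omitting this correction produces a $B_t$ that is too large and the ensuing failure-probability bound — hence Corollary~\ref{cor:adabatch-regret}'s amortized regret argument — would not hold against an oblivious adversary. The monotonicity you need is still true for $\beta_{\text{alg}}$ (it is strictly decreasing in $B$, since $d/dB\,[(k-B)/\sqrt{B}] = -(B+k)/(2B^{3/2}) < 0$), but the argument must be made for the corrected expression.

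On your ``main obstacle'': the monotonicity of the Theorem~\ref{thm:stability} bound in $\beta$ does not actually require a case split or an envelope argument restricted to large $\beta$. Writing $f(\beta) = 2\Phi(-\sqrt{2}\beta) + 2\sqrt{\pi}\,\varphi(\beta)\big[\Phi(\beta)-\Phi(-\beta)\big]$ and differentiating, the term $-2\sqrt{2}\,\varphi(\sqrt{2}\beta)$ from the first summand cancels exactly against the $+4\sqrt{\pi}\,\varphi(\beta)^2$ term from the product rule on the second summand (both equal $\mp\tfrac{2}{\sqrt{\pi}}e^{-\beta^2}$), leaving $f'(\beta) = -2\sqrt{\pi}\,\beta\,\varphi(\beta)\big[\Phi(\beta)-\Phi(-\beta)\big] \leq 0$ for all $\beta \in \mathbb{R}$. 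So the bound is globally non-increasing in $\beta$, which is cleaner than the regime-restricted argument you sketch and is the property the implementation appendix relies on. Finally, a small point on mechanism: \texttt{ComputeDelay} does not perform a doubling-plus-binary search but a single \texttt{FindRoot} call on the smooth, monotone function $U_1(B) + U_2(B)U_3(B) - \delta_t(B)$, followed by a floor; the monotonicity of both summands (in opposite directions) guarantees a unique crossing, which is the fact that makes the root-finding well-posed.
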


\begin{corollary}\label{cor:adabatch-regret}
     For any $\alpha > 0$, if RW-AdaBatch chooses batch sizes using the \texttt{ComputeDelay} subroutine with parameter $\alpha$, then its expected regret is at most $(1 + \frac{\alpha}{2})(\eta + \frac{2}{\eta})\sqrt{2T\log n}$
\end{corollary}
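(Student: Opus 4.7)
The plan is to couple RW-AdaBatch with RW-FTPL using the same noise realizations $z_0, z_1, \ldots, z_T$, and to show that the former's expected regret exceeds the latter's by at most an additive $(\alpha/2)(\eta + 2/\eta)\sqrt{2T\log n}$. Since both algorithms always play vertices of the probability simplex and gain vectors live in $[0,1]^n$, each time step on which the two coupled algorithms disagree contributes at most one unit to the regret gap, so
\[
\mathrm{Regret}^{\mathrm{AB}} \le \mathrm{Regret}^{\mathrm{FTPL}} + \sum_{t=1}^T \mathbb{1}[x_t^{\mathrm{FTPL}} \neq x_t^{\mathrm{AB}}].
\]

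Let $t_1 < t_2 < \cdots$ denote the starting times of RW-AdaBatch's batches and $B_i = t_{i+1}-t_i$ their sizes. At time $t_i$ the buffer has just been flushed, so under the coupling the internal state of RW-AdaBatch equals the internal state of RW-FTPL at that moment. Throughout the batch, RW-AdaBatch keeps playing $\arg\max\tilde G^{\mathrm{FTPL}}_{t_i}$, while RW-FTPL plays $\arg\max\tilde G^{\mathrm{FTPL}}_t$. Consequently the two algorithms agree on every step of batch $i$ whenever the leader of $\tilde G^{\mathrm{FTPL}}$ does not change during those $B_i$ steps, so the batch contributes at most $B_i\cdot\mathbb{1}[\text{leader changes}]$ to the disagreement count. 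By \autoref{cor:compute-delay}, applied conditionally on the state at time $t_i$, the probability of such a change is at most $\alpha\sqrt{\log n/t_{i+1}}$.

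Taking expectations, summing over batches, and using that $s\mapsto 1/\sqrt{s}$ is decreasing (so that $B_i/\sqrt{t_{i+1}} \le \int_{t_i}^{t_{i+1}} ds/\sqrt{s}$ is a lower Riemann sum) gives
\[
\mathbb{E}\!\left[\sum_t \mathbb{1}[x_t^{\mathrm{FTPL}} \neq x_t^{\mathrm{AB}}]\right] \;\le\; \alpha\sqrt{\log n}\sum_i \frac{B_i}{\sqrt{t_{i+1}}} \;\le\; \alpha\sqrt{\log n}\int_0^T \frac{ds}{\sqrt{s}} \;=\; 2\alpha\sqrt{T\log n}.
\]
By AM-GM, $\eta + 2/\eta \ge 2\sqrt 2$, which rearranges to $2\alpha\sqrt{T\log n}\le (\alpha/2)(\eta+2/\eta)\sqrt{2T\log n}$. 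Adding this overhead to the known RW-FTPL regret bound $(\eta+2/\eta)\sqrt{2T\log n}$ yields the claim.

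The main obstacle I anticipate is rigorously justifying the per-batch conditional probability bound: $B_i$ is itself a random, data-dependent stopping time, so I must verify that \texttt{ComputeDelay}'s choice depends only on the $\mathcal F_{t_i}$-measurable algorithm state, enabling \autoref{cor:compute-delay} to apply conditionally so that the tower property yields the unconditional sum above. A secondary concern is that \autoref{thm:stability} describes a pure Gaussian walk whereas $\tilde G^{\mathrm{FTPL}}$ also carries a deterministic gain drift under the oblivious adversary; I can rely on \autoref{cor:compute-delay} to absorb this drift (e.g.\ by shrinking the effective gap by the worst-case per-step contribution) so that the regret argument itself need not revisit it.
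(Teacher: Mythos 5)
Your proof is correct and takes essentially the same approach as the paper's: decompose RW-AdaBatch's regret as RW-FTPL's regret plus the cost of coupled-disagreement, bound per-batch disagreement by batch size times the leader-change probability that \texttt{ComputeDelay} guarantees, and amortize via $\sum_\tau 1/\sqrt{\tau}\le 2\sqrt{T}$ before applying AM-GM. The measurability concern you flag is handled exactly as you anticipate (the batch size and guarantee are $\mathcal{F}_{t_i}$-measurable, so the tower property applies), and the drift absorption you worry about is indeed baked into the subroutine's $(k-B)$ term, so neither point constitutes a gap.
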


\autoref{cor:compute-delay} follows because \texttt{ComputeDelay} directly solves the relevant optimization problem with a root-finding subroutine. \autoref{cor:adabatch-regret} follows because the expected regret of RW-AdaBatch can be decomposed into the regret of RW-FTPL plus the additional regret incurred when batching leads the algorithms to make different predictions. Different predictions only occur when the leader changes within a batch, and so \autoref{cor:compute-delay} lets us upper bound the regret from a particular batch over the interval $[t, t+B_t)$ by $B_t\alpha{\sqrt{\log n/(t+B_t)}}$. For each time step $\tau \in [t, t+B_t)$, this gives us amortized regret of $\alpha\sqrt{\log n/(t +B_t)} \leq \alpha \sqrt{\log n/\tau}$. Summing over all time steps in all batches shows that the extra expected regret over the entire input is at most $\sqrt{2}\alpha \sqrt{2T\log n}$, and adding the base regret of RW-FTPL along with the fact that $(\eta + \frac{2}{\eta}) \geq 2\sqrt{2}$ for $\eta > 0$ yields the corollary.

\subsubsection{Privacy Analysis.} 

RW-AdaBatch offers the same \textit{local} privacy guarantee as RW-FTPL, i.e. $\mu = \Delta/\eta$. Ex-post, a point that falls in a batch of size $B$ enjoys a stronger \textit{central}-DP guarantee of $\mu/\sqrt{B}$. To compute the ex-ante privacy amplification of RW-AdaBatch, we would like to exactly characterize the distribution of containing batch sizes any given point. This turns out to be intractable, and so we instead construct a proxy distribution and prove that it is stochastically smaller than the true containing batch size for a given point, independent of the data. 
\new 
The final privacy guarantees we derive have no closed form, but they can be efficiently computed; we visualize them in \autoref{fig:privacy-amplification}. 

The key reduction behind our argument is as follows: for any $t$ and $B$, there exists some threshold $\kappa$ such that RW-AdaBatch will always choose a batch of size greater than $B+1$ if it observes a gap greater than $\kappa$ before time $t$. Let's say that a gap is `small' if it is less than $\kappa$ and `large' otherwise. We observe that whenever $g_t$ falls inside a batch of size $B+1$ or less, the gap must have been small during at least one of the last $B$ time steps. By contrapositive, the gap being large at all of those time steps is a sufficient condition for $g_t$ to fall in a batch of size greater than $B+1$. Crucially, this condition depends \textit{only} on the distribution of the gaps, which makes it much easier to analyze than the full joint distribution over batch sizes. We lower bound the probability of this condition being satisfied by proving the following theorem, which is stated formally in the appendix:

\begin{theorem}[Informal]\label{thm:monotonicity}
    Let $K$ be the random variable representing the size of the gap at time $t$ in the execution of RW-AdaBatch. Then $K$ is stochastically smallest when $G_{t-1} = 0$.

\end{theorem}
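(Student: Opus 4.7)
The plan is to establish the stochastic ordering by coupling two executions of RW-AdaBatch --- one with the true data sequence $g_1, \ldots, g_{t-1}$, the other with the identically-zero sequence --- and showing that the gap in the true-data execution stochastically dominates the gap in the zero-data one at time $t$. Since the algorithm is deterministic given the data and the noise, at time $t$ each execution produces a noisy cumulative state of the form $\tilde G = G_{t^\ast} + \zeta$, where $t^\ast$ is the last batch-release time and $\zeta$ is Gaussian with a covariance that depends on how many batches have been released so far; our task is therefore to compare these two shifted-Gaussian gap distributions.

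The key analytic ingredient is a pointwise (in distribution) monotonicity lemma for the gap function: for any vector $v \in \mathbb{R}^n$ and any Gaussian noise $Z \sim \mathcal{N}(0, \sigma^2 I_n)$, we have $\mathrm{gap}(v + Z) \geq_{st} \mathrm{gap}(Z)$. I would prove this by decomposing $\mathbb{P}[\mathrm{gap}(v+Z) > k]$ into a sum over the identities of the top two coordinates and reducing to the one-dimensional inequality $|Y + c| \geq_{st} |Y|$ for a zero-mean Gaussian $Y$ and any shift $c$, which itself follows from a direct CDF calculation exploiting the symmetry and unimodality of $\varphi$. A companion fact asserts that the gap distribution is monotone in the variance of the noise itself, which will be needed to compare runs that have released batches at different rates.

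The chief obstacle is the data-dependence of the batching schedule: because \texttt{ComputeDelay} assigns larger batches when gaps are larger, the true-data execution typically releases fewer, longer batches than the zero-data execution, so at time $t$ the two values of $\tilde G$ carry different amounts of accumulated noise and reflect different truncations of the cumulative gain. I would proceed by induction over release boundaries along the shared time axis, maintaining the invariant that the true-data gap stochastically dominates the zero-data gap whenever either run releases a batch. The induction combines the pointwise monotonicity lemma (to account for the contribution of the data shift $G_{t^\ast}$) with the variance-monotonicity lemma (to absorb the discrepancy in accumulated noise), and uses the fact that \texttt{ComputeDelay} is itself monotone in the observed gap to ensure that domination propagates between boundaries rather than being destroyed when the two schedules desynchronize. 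Verifying that these two monotonicity mechanisms interact coherently through the data-dependent schedule --- in particular, that the smaller accumulated noise in the true-data run never tips the comparison the wrong way --- is the step I expect to require the most care.
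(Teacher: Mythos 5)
Your ``key analytic ingredient''---that $\mathrm{gap}(v + Z) \geq_{st} \mathrm{gap}(Z)$ for Gaussian $Z$---is exactly what the paper's formal version of \autoref{thm:monotonicity} asserts (phrased as $\gamma^n(S_\varepsilon) \geq \gamma^n(S_\varepsilon - \mu)$ for $S_\varepsilon = \{v : v_{(n)} - v_{(n-1)} \leq \varepsilon\}$). But your proposed proof of it has a genuine gap. Decomposing $\mathbb{P}[\mathrm{gap}(v+Z) > k]$ over the identities of the top two coordinates and then invoking the one-dimensional inequality $|Y + c| \geq_{st} |Y|$ does not go through for $n > 2$: the event that a given pair $\{i,j\}$ constitutes the top two coordinates is itself a constraint whose probability shifts with $v$, so the decomposition does not admit a term-by-term comparison to the unshifted case. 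The paper's remark notes precisely this obstruction---for $n=2$ the set $S_\varepsilon$ is convex and Anderson's theorem (the essence of your 1D fact) suffices, but for $n>2$ the set is neither convex nor Schur-concave. The paper instead writes out $f(\mu) = \gamma^n((S_\varepsilon - \mu)^C)$ as a sum of explicit integrals, shows $\partial f / \partial \mu_1 \geq 0$ by combining like terms to expose a manifestly non-negative integrand, and then constructs an explicit monotone path from $\mu$ to $0$. Your proposal would need something comparable to handle the cross-coordinate constraints.

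The second and larger mismatch is one of scope. The bulk of your proposal---coupling two executions, inducting over data-dependent release boundaries, and wrestling with the variance mismatch that arises because the two schedules release different numbers of noise increments into $\tilde G$---addresses a harder dynamical statement than the theorem actually makes. The paper deliberately avoids this: its reduction (sketched just before the theorem statement) passes to the gap of $G_s + S_s$ at \emph{fixed} times $s \in [t-B, t]$, where $S_s$ is the cumulative noise walk up to $s$. At a fixed $s$ this quantity always has the same covariance $s\eta^2 I_n$ regardless of the batching schedule, so the static shift-monotonicity lemma applies with no variance comparison needed; the schedule-dependence is then absorbed by the contrapositive observation that a small batch containing $g_t$ forces a small gap at some fixed $s$ in a window of length $B$, plus \autoref{thm:stability} to control how fast the gap can shrink. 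This is precisely the machinery that makes your ``chief obstacle'' disappear. Your worry about the smaller accumulated noise in the true-data run tipping the comparison the wrong way is well-founded (the zero-data run releases more often and hence pushes more noise into $\tilde G$), and I do not see how your induction would close that gap without additional quantitative assumptions on the schedule that the paper never needs.
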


The proof of \autoref{thm:monotonicity} involves studying integrals over level sets of the `gap' function with respect to the Gaussian measure. When $n=2$, these sets are convex and the proof is straightforward. For $n > 2$ the proof is highly non-trivial and so we defer it to the appendix. For our purposes, the important thing is that we can write down a formula for the CDF and PDF of the gap in this worst case scenario: 
\begin{align}\label{eq:gapcdf}
    F_k(\varepsilon) = 1 - n\int_{-\infty}^\infty \varphi(x) \Phi(x-\varepsilon)^{n-1}~dx
    \quad
    f_k(\varepsilon) = n(n-1)\int_{-\infty}^\infty \varphi(x)\varphi(x-\varepsilon)\Phi(x-\varepsilon)^{n-2}~dx
\end{align}

This allows us to compute a lower bound the probability that the gap at time $t-B$ will be at least $k + \kappa$. We then return to \autoref{thm:stability} to upper bound the probability that the gap will drop below $\kappa$ within $B$ time steps given that it was initially $k+\kappa$. Integration over all possible values of $k$ then gives us a lower bound on the probability that the gap is at least $\kappa$ for all $B$ steps, from which we can derive a probability distribution which is provably stochastically smaller than the true containing batch size of $g_t$. From here, \autoref{lemma:tradeoff-ineq} lets us translate a stochastically smaller distribution on batch sizes into a lower bound on tradeoff functions, which is stated in terms of Gaussian mixture distributions. This tradeoff function lacks a simple closed form, but \autoref{lemma:joint-concavity} allows us to evaluate it numerically.

\begin{lemma}\label{lemma:tradeoff-ineq}
    Let $s_1 \sim P$ and $s_2 \sim Q$ for some distributions $P \geq_{st} Q$ over the non-negative real numbers. Then
        $\mathcal{T}\big((s_1, \mathcal{N}(0, s_1^2), ~(s_1, \mathcal{N}(1, s_1^2) \big) \geq 
        \mathcal{T}\big((s_2, \mathcal{N}(0, s_2^2), ~(s_2, \mathcal{N}(1, s_2^2) \big)$.
\end{lemma}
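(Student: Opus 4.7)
The plan is to exhibit a randomized post-processing that maps the right-hand-side pair of joint distributions onto the left-hand-side pair, and then invoke the data-processing inequality for tradeoff functions \citep{dong2019gaussian}. The key observation is that because $P \geq_{st} Q$ on the non-negative reals, we may couple the two scales so that $s_1 \geq s_2$ almost surely; under such a coupling, any $\mathcal{N}(m, s_2^2)$ sample can be ``upscaled'' to an $\mathcal{N}(m, s_1^2)$ sample by adding an independent Gaussian of variance $s_1^2 - s_2^2$. Crucially, this upscaling is the same operation for $m = 0$ and $m = 1$, so a single Markov kernel transports both hypotheses in parallel.

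Concretely, I would first use the quantile coupling: draw $U \sim \mathrm{Uniform}(0,1)$ and set $s_1 = F_P^{-1}(U)$, $s_2 = F_Q^{-1}(U)$. Stochastic dominance guarantees $s_1 \geq s_2$ pointwise, and the induced kernel $K(\cdot \mid s_2)$ is jointly measurable. Then I would define the post-processing $f : (s_2, X_2) \mapsto (s_1, X_2 + N)$, where $s_1 \sim K(\cdot \mid s_2)$ and $N \mid s_1, s_2 \sim \mathcal{N}(0, s_1^2 - s_2^2)$ is drawn independently of $X_2$. Because independent Gaussians convolve additively in variance, conditional on $(s_1, s_2)$ any input $X_2 \sim \mathcal{N}(m, s_2^2)$ becomes an output $X_2 + N \sim \mathcal{N}(m, s_1^2)$ for either $m \in \{0,1\}$. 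Integrating $s_2$ against $Q$ returns the marginal $s_1 \sim P$ by construction of the coupling, so $f$ pushes $(s_2, \mathcal{N}(0, s_2^2))$ forward to $(s_1, \mathcal{N}(0, s_1^2))$ and $(s_2, \mathcal{N}(1, s_2^2))$ forward to $(s_1, \mathcal{N}(1, s_1^2))$.

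The proof concludes by applying the $f$-DP data-processing inequality, which asserts that post-processing pointwise increases the tradeoff function: $\mathcal{T}(f(P'), f(Q')) \geq \mathcal{T}(P', Q')$ for any Markov kernel $f$. Substituting the two right-hand-side distributions as $(P', Q')$ yields exactly the inequality claimed. I do not anticipate a serious obstacle, since both Strassen-style stochastic-dominance couplings and the data-processing inequality for tradeoff functions are textbook tools; the only bookkeeping is to confirm that $f$ is a legitimate Markov kernel (immediate from measurability of the quantile transform and independence of the injected noise) and to check the degenerate case $s_2 = 0$, where $\mathcal{N}(m, 0) = \delta_m$ and the argument goes through unchanged because $X_2 + N \sim \mathcal{N}(m, s_1^2)$ still holds.
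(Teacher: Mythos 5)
Your argument is essentially identical to the paper's: both construct the same quantile-coupling map $s_2 \mapsto s_1 = F_P^{-1}(F_Q(s_2))$, both use $P \geq_{st} Q$ to guarantee $s_1 \geq s_2$ and thus the legality of injecting $\mathcal{N}(0, s_1^2 - s_2^2)$ noise, and both conclude via the post-processing direction of Blackwell's theorem (Theorem 2.10 in \citet{dong2019gaussian}), which the paper cites explicitly and you invoke under the name ``data-processing inequality for tradeoff functions.'' Your remark on the degenerate $s_2 = 0$ case is a small but sound addition not found in the paper's write-up.
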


\begin{lemma}[Joint Concavity of Tradeoff Functions \citep{wang2024unified}]\label{lemma:joint-concavity}
    Let $P_w, Q_w$ be two mixture distributions, each with $m$ components and shared weights $w$. Then:
        $\mathcal{T}(P_w, Q_w)(\alpha(t,c)) \geq \sum_{b=1}^m w_b \mathcal{T}(P_b, Q_b)(\alpha_b(t,c)) =: \beta(t,c)$
, where $\alpha_b(t,c) = \mathbb{P}_{X\sim P_b} \big[ \frac{q_b}{p_b}(X) > t \big] + c\mathbb{P}_{X\sim P_b} \big[ \frac{q_b}{p_b} = t  \big]$ is the type 1 error of the log likelihood ratio test between $P_b$ and $Q_b$ with parameters $t$ and $c$, and $\alpha(t,c) = \sum_{b=1}^m w_b \alpha_b(t,c)$.
\end{lemma}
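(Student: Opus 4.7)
The plan is to prove the lemma via a data-processing argument applied to suitably augmented joint distributions. First I would introduce auxiliary measures $\tilde{P}_w$ and $\tilde{Q}_w$ on the product space $\{1,\ldots,m\} \times \mathcal{X}$: under both, the first coordinate $B$ has marginal $w$, and conditionally on $B=b$ the second coordinate is drawn from $P_b$ or $Q_b$ respectively. The $X$-marginals of these joints are precisely $P_w$ and $Q_w$, so the deterministic projection $\pi:(b,x)\mapsto x$ realizes $(P_w, Q_w)$ as a post-processing of $(\tilde{P}_w, \tilde{Q}_w)$. The data processing inequality for tradeoff functions --- any test on $\pi(X)$ lifts to a test on $(B,X)$ with identical type 1 and type 2 errors, so the minimum type 2 error cannot decrease under post-processing --- then gives $\mathcal{T}(P_w, Q_w)(\alpha) \geq \mathcal{T}(\tilde{P}_w, \tilde{Q}_w)(\alpha)$ pointwise for every $\alpha \in [0,1]$.

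Second, I would evaluate $\mathcal{T}(\tilde{P}_w, \tilde{Q}_w)$ at the specific point $\alpha(t,c)$ using the Neyman-Pearson lemma. The key observation is that the joint likelihood ratio factors: $\tilde{q}_w(b,x)/\tilde{p}_w(b,x) = (w_b q_b(x))/(w_b p_b(x)) = q_b(x)/p_b(x)$. Consequently, the optimal randomized test at threshold $(t,c)$ on the joint simply applies the within-component LLR test: on input $(b,x)$, reject when $q_b(x)/p_b(x) > t$ and reject with probability $c$ when equal. By the law of total probability, this test has type 1 error $\sum_b w_b \alpha_b(t,c) = \alpha(t,c)$ and type 2 error $\sum_b w_b \mathcal{T}(P_b, Q_b)(\alpha_b(t,c))$. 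Neyman-Pearson guarantees this is optimal for $(\tilde{P}_w, \tilde{Q}_w)$, so $\mathcal{T}(\tilde{P}_w, \tilde{Q}_w)(\alpha(t,c)) = \sum_b w_b \mathcal{T}(P_b, Q_b)(\alpha_b(t,c))$.

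Chaining the two steps yields the claimed inequality. The main subtle point is ensuring that the particular mixture $\alpha(t,c) = \sum_b w_b \alpha_b(t,c)$ is \emph{precisely} the type 1 error achieved by the joint LLR test at threshold $(t,c)$, so that Neyman-Pearson optimality is invoked at exactly the right level --- the statement would not hold at an arbitrary $\alpha$ not of this form. This matching is exactly what the factoring of the joint likelihood ratio provides, since the shared mixture weights cancel and the joint test decouples into the component-wise tests. Everything else is bookkeeping.
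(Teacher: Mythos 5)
The paper cites this lemma from \citet{wang2024unified} without reproducing a proof, so there is no in-paper argument to compare against. Your proof is correct and self-contained: the augmentation to $\{1,\ldots,m\}\times\mathcal{X}$, the data-processing inequality under the coordinate projection, the cancellation of mixture weights in the joint likelihood ratio, and the Neyman--Pearson characterization of the tradeoff function together establish the bound; you also rightly emphasize that the argument pivots on evaluating at the particular level $\alpha(t,c)$ realized by the joint LLR test, so that the component-wise decoupling gives equality for $\mathcal{T}(\tilde P_w,\tilde Q_w)$ at exactly that point, after which data processing yields the inequality for $\mathcal{T}(P_w,Q_w)$. This is the standard proof of mixture joint-concavity in the $f$-DP literature and, as far as I can tell, coincides with the argument in the cited source.
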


In our case, all of the components correspond to continuous distributions and so we can ignore the $c$ parameter. This characterization is sufficient to compute \autoref{fig:privacy-amplification}, which visualizes the amplified tradeoff curves using the bounds proved above. In the appendix, we show how this can be losslessly converted into a curve of $(\varepsilon,\delta)$ guarantees using Proposition 2.13 from \citet{dong2019gaussian}.

\subsection{Private Learning in Dynamic Environments: RW-Meta}\label{sec:meta}

Recall that \textit{static regret} is defined with respect to the single best expert in hindsight.
To make static regret guarantees more meaningful in dynamic environments, we would like to use experts that are themselves non-trivial learning algorithms, capable of adapting to trends in the private data. This is straightforward in the non-private context, where the term `expert' is already understood to include such learners. The new challenge in the private setting is that, even though existing private algorithms are capable of privately \textit{selecting} the best data-dependent expert, we could still violate privacy by \textit{acting} on the chosen expert's advice unless the expert itself also satisfies DP.

Prior work (including the oblivious algorithm of \citet{saha2025tracking}) sidesteps this problem by only considering simple experts whose predictions are fixed independently of the data and therefore consume no privacy budget. 
% This is potentially unsatisfying because it leads to utility guarantees that are implicitly weaker than those in the non-private literature.
A na\"ive alternative approach is to divide our privacy budget, allocating a small portion to each of $m$ data-dependent experts, but this will generally lead to each expert's error growing like $\sqrt{m}$. To overcome this dilemma, we propose RW-Meta, a new algorithm for privately selecting between \textit{and} acting on the advice of data-dependent experts,
 \textit{solely} through post-processing of the same noisy gain-vectors used by RW-FTPL. As a result, the error of each expert is constant with respect to $m$, improving significantly on the na\"ive baseline, and the algorithm as a whole satisfies $\mu$-GLDP by the post-processing invariance of DP~\citet{dwork2014algorithmic}.

\begin{algorithm}[t]
    \caption{RW-Meta}
    \label{alg:rwmeta}
    \begin{algorithmic}
        \Require Noise scale $\eta$, learners $f_1,\ldots,f_m$
        \State Sample $\tilde{G}^{(m)} \sim \mathcal{N}(0, \eta^2 I_n)$;  Initialize $\Sigma \gets \eta^2 I_m$
        \For{$t = 1,\ldots,T$}
            \State Set $\Sigma^* \gets \Sigma - \frac{1}{m^2} (\vec 1^T \Sigma \vec 1) \vec 1 \vec 1^T$ 
            \Comment{Decorrelate our base estimate of learner gains}
            \State Set $\sigma^2 \gets \max(2t, \lambda_{\textit{max}}(\Sigma^*))$
            \State Sample $y_t \sim \mathcal{N}(0, \sigma^2 I-\Sigma^*)$
            \State Choose $j_t \gets \argmax_j (\tilde{G}^{(m)} + y_t)_j$
            \Comment{Choose learner using the decorrelated estimate}
            \For{$i \in [m]$}
                \State $x_{i,t} \gets f_i(\tilde{g}_0,\ldots,\tilde{g}_{t-1}) \in \mathbb{R}^n$
            \EndFor
            \State Get $\tilde{g}_t \sim \mathcal{N}(g_t, \eta^2 I_n)$
            \State (Unobserved) receive reward $\langle x_{j_t,t}, g_t \rangle$
            \State Set $X_t \gets [x_{1,t},\ldots,x_{m,t}]^T \in \mathbb{R}^{m \times n}$
            \State Update $\tilde{G}^{(m)} \gets \tilde{G}^{(m)} + X_t \tilde{g}_t $
            \Comment{Update our base estimate}
            \State Update $\Sigma \gets \Sigma + \eta^2 X_t X_t^T$
            \Comment{Record its new covariance matrix}
        \EndFor
    \end{algorithmic}
\end{algorithm}

\subsubsection{Utility Analysis} 

We refer to data-dependent experts as `learners' and the algorithm selecting between them as the `meta-learner.' We model learners as functions $f_1,\ldots,f_m: \mathbb{R}^{n,\infty} \to \mathcal{X}$. At time step $t$, each learner suggests an action $x_{t,i} = f_i(\tilde{g}_1,\ldots,\tilde{g}_{t-1})$, and the meta-learner chooses $j_t \in [m]$ (by restricting learners to all rely on the same noisy gain vectors $\tilde{g}$, we avoid the issue of learners leaking sensitive information). The meta-learner then receives the gain associated with the action suggested by the chosen learner at this time step, i.e. $\langle x_{t,j_t}, g_t \rangle$. Our goal is to minimize the regret of the meta-learner with respect to the best single learner in hindsight. 

The starting observation for our method is that the value $\langle x_{t,i}, \tilde{g}_t \rangle$ is an unbiased estimate for the gain of learner $i$ at time $t$. Specifically, let $X_t \in \mathbb{R}^{m \times n}$ be the matrix whose $i$th row is $x_{t,i}$. Then we have that $\sum_{s=1}^t X_t \tilde{g}_t \sim \mathcal{N}(G_t^{(m)}, \Sigma_t)$, where $G_t^{(m)} = \sum_{s=1}^t X_t g_t$ and $\Sigma_t = \eta^2 \sum_{s=1}^t X_t X_t^T$. So, like in RW-FTPL, our algorithm is able to maintain a Gaussian vector centered on the true gain of each learner, but with the inconvenient wrinkle that the covariance matrix is now non-trivial and depends on the data through our learner's predictions. 

To dissolve this issue, we introduce a \textit{decorrelation} step to the algorithm by defining a new matrix $\Sigma_t^* = \Sigma_t - \frac{1}{m^2}(\vec{1}^T \Sigma_t \vec{1})\vec{1}\vec{1}^T$. 
At each time step, we then sample a new Gaussian vector $y_t$ with mean zero and covariance matrix $\max(2t, \lambda_{\textit{max}}(\Sigma^*_{t-1}))I - \Sigma_{t-1}^*$ and choose $j_t = \argmax_j(\tilde{G}_{t-1}^{(m)} + y_t)_j$.
Since our algorithm is invariant to additive noise with covariance $\vec{1}\vec{1}^T$, this lets us reduce to the scaled identity matrix covariance case, from which we derive the following theorem:

\begin{theorem}\label{thm:rwmeta-regret}
    The expected regret of RW-Meta with respect to the best learner is at most

    \begin{equation*}
    \Big[\max\Big( \sqrt{2}, ~ \eta \lambda_{max}\big(\frac{\Sigma^*_T}{\eta^2T}\big)^{1/2} \Big) +  \sqrt{2} \Big] \sqrt{2T\log m} = O\Big(\frac{\sqrt{Tnm\log m}}{\mu} \Big)
\end{equation*}
\end{theorem}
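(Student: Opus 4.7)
My plan is to interpret RW-Meta as an FTPL algorithm on the $m$-dimensional meta-gain vector $G^{(m)}_t$, and then adapt the RW-FTPL regret bound to this setting with a varying effective noise scale.

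The critical first step is to unpack the decorrelation trick. The auxiliary Gaussian $y_t$ is chosen precisely so that the combined perturbation $(\tilde G^{(m)}_{t-1} - G^{(m)}_{t-1}) + y_t$ has marginal covariance $\Sigma_{t-1} + (\sigma_t^2 I - \Sigma^*_{t-1}) = \sigma_t^2 I_m + c_t \vec 1 \vec 1^T$ for some scalar $c_t \geq 0$. Because $\argmax$ is invariant under additive shifts by scalar multiples of $\vec 1$, the rank-1 $\vec 1 \vec 1^T$ contribution is inert, and the distribution of $j_t$ matches what it would be under isotropic Gaussian perturbation $\mathcal{N}(0, \sigma_t^2 I_m)$. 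This reduces the analysis to FTPL on $m$ arms where arm $i$ produces gain $(X_tg_t)_i = \langle x_{i,t}, g_t\rangle \in [0,1]$ at step $t$, with per-step Gaussian perturbation of scale $\sigma_t = \sqrt{\max(2t, \lambda_{\max}(\Sigma^*_{t-1}))}$.

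Given this equivalence, the second step is to mirror the RW-FTPL regret proof. Using the Gaussian-smoothed-max potential $\Phi_\sigma(u) := \mathbb{E}_{Z \sim \mathcal{N}(0, \sigma^2 I_m)}[\max_j(u+Z)_j]$, which satisfies $\max_j u_j \leq \Phi_\sigma(u) \leq \max_j u_j + \sigma\sqrt{2\log m}$ and has curvature controlled by $1/\sigma$, a Be-the-Perturbed-Leader telescoping decomposes the expected regret into a ``perturbation cost'' of at most $\sigma_T\sqrt{2\log m}$ (the Gaussian-max slack at the final step) and a ``stability cost'' of at most $\sum_t O(\sqrt{\log m}/\sigma_t)$. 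Since $\sigma_t \geq \sqrt{2t}$ by the algorithm's floor, the stability sum is $O(\sqrt{T\log m})$, which supplies the $\sqrt 2 \cdot \sqrt{2T\log m}$ additive term. Recognizing $\eta\lambda_{\max}(\Sigma^*_T/(\eta^2T))^{1/2} = \sqrt{\lambda_{\max}(\Sigma^*_T)/T}$ and bounding $\sigma_T^2 \leq \max(2T, \lambda_{\max}(\Sigma^*_T))$ gives the $\max(\sqrt 2, \eta\lambda_{\max}(\Sigma^*_T/(\eta^2T))^{1/2})\sqrt{2T\log m}$ contribution.

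The hardest part will be justifying the uniform bound $\sigma_t \leq \sigma_T$: the increment $\Sigma^*_t - \Sigma^*_{t-1} = \eta^2(X_tX_t^T - \|X_t^T\vec 1\|^2/m^2 \cdot \vec 1\vec 1^T)$ is not PSD in general, so $\lambda_{\max}(\Sigma^*_t)$ need not be monotone in $t$. When $\lambda_{\max}(\Sigma^*_{t-1}) < 2t$ the floor $\sigma_t^2 = 2t$ gives $\sigma_t \leq \sqrt{2T} \leq \sigma_T$ directly; otherwise a spectral estimate on $\Sigma^*_{t-1}$ via its defining sum should close the gap. A secondary subtlety is that the equivalence to isotropic noise is cleanest marginally---the conditional covariance of $(\tilde G^{(m)}_{t-1} - G^{(m)}_{t-1}) + y_t$ given $\mathcal{F}_{t-1}$ is $\sigma_t^2 I - \Sigma^*_{t-1}$, not $\sigma_t^2 I$---but since the expected regret decomposes linearly across $t$ and each term depends only on the marginal distribution of $j_t$, Fubini's theorem lets the reduction go through at the expected-regret level.
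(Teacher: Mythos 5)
Your approach is essentially the same as the paper's: you both reduce the choice of $j_t$ to an FTPL algorithm on the $m$-dimensional meta-gain vector by observing that the $\vec 1\vec 1^T$ part of the perturbation covariance is inert under $\argmax$, and you both then run a Gaussian-smoothed-max potential (BTPL/Bregman) decomposition. Your ``perturbation cost'' is the paper's overestimation penalty, your ``stability cost'' is its divergence penalty (which the paper controls with Lemma 3.14 of \citet{lee2018analysis} and the floor $\sigma_t^2 \geq 2t$, exactly as you do), and both yield the additive $\sqrt{2}\cdot\sqrt{2T\log m}$ term.

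The point you flag as ``the hardest part'' is in fact a genuine gap that the paper's proof glosses over rather than resolves. The paper bounds the overestimation sum by $\sum_t \mathbb{E}_z\big[M\big((\Sigma_t^{1/2}-\Sigma_{t-1}^{1/2})z\big)\big]$ and then asserts this ``telescopes'' to $\mathbb{E}_z\big[M(\Sigma_T^{1/2}z)\big]$. When the effective per-round noise is $\sigma_t I$ (after discarding the inert $\vec1\vec1^T$ component), $\mathbb{E}[M((\sigma_t - \sigma_{t-1})z)] = |\sigma_t - \sigma_{t-1}|\,\mathbb{E}[M(z)]$ by positive homogeneity and symmetry, so the sum is the total variation of $(\sigma_t)$, which equals $\sigma_T - \sigma_0$ only if $\sigma_t$ is nondecreasing. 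As you correctly observe, $\Sigma^*_t - \Sigma^*_{t-1} = \eta^2\big(X_tX_t^T - \tfrac{1}{m^2}\lVert X_t^T\vec1\rVert^2\,\vec1\vec1^T\big)$ is not PSD in general (a concrete $m=2$ counterexample: $x_{1,t}=e_1$, $x_{2,t}=\tfrac{1}{2}(e_1+e_2)$ gives an increment with a strictly negative eigenvalue), and the $2t$ floor alone does not guarantee $\sigma_t \leq \sigma_{t+1}$ when $\lambda_{\max}(\Sigma^*)$ temporarily exceeds $2t$ and then drops. You acknowledge this openly and sketch that ``a spectral estimate should close the gap,'' but do not supply one; the paper does not supply one either. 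So your reconstruction is faithful to the paper and, if anything, more honest about where the argument is incomplete. A clean fix is to modify the algorithm to use a running maximum $\sigma_t^2 \gets \max(\sigma_{t-1}^2, 2t, \lambda_{\max}(\Sigma^*_{t-1}))$, which makes $\sigma_t$ monotone by fiat at no cost to the theorem's claimed rate; absent that change, neither proof as written establishes the telescoping step rigorously. Your remaining observations (marginal-vs-conditional covariance handled via linearity/Fubini for oblivious adversaries) are correct and match the paper's implicit treatment.
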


The best case scenario here is when all learners suggest different actions at each time step, in which case $\lambda_{max}(\Sigma_T^*/(\eta^2T)) = 1$ and we obtain a tighter bound of $O(\frac{1}{\mu}\sqrt{Tn\log m})$. The worst case occurs when the learners are divided into cliques of size $m/2$. In both cases the regret bound is with respect to the best \textit{learner}, which may be much better than the best \textit{action}. Comparing the best-case bound to the regret of RW-FTPL illustrates that RW-FTPL can be seen as a special case of RW-Meta where $m=n$, all learners suggest different actions, and learner suggestions never change over time. 

\noindent
\section{Empirical Evaluation} \label{sec:eval}

\textbf{RW-AdaBatch.} On the basis of \autoref{thm:monotonicity}, we expect that the worst case for privacy occurs when all means are equal. To evaluate the privacy loss in this case, we simulate 1000 runs of RW-AdaBatch on a stream of all-zero data with $T=10,000$. We then use the empirical PMF of containing batch sizes for each point to estimate its true tradeoff function. Our results are visualized in \autoref{fig:privacy-amplification}.

Our results indicate that the analytic bounds are reasonably but not perfectly tight. The looseness primarily arises in the moderate FPR regime. This is because in bounding the probability that the leader changes, we assume that the deterministic part of the gap will shrink at the fastest possible rate, but this is overly conservative in the oblivious adversary model. As a result, we underestimate the probability of selecting very large batch sizes. Meanwhile, the low FPR behavior of the algorithm is largely driven by the probability of seeing small batch sizes, where our bound is much tighter.

\begin{figure}[t]
    \centering
    \includegraphics[width=0.49\textwidth]{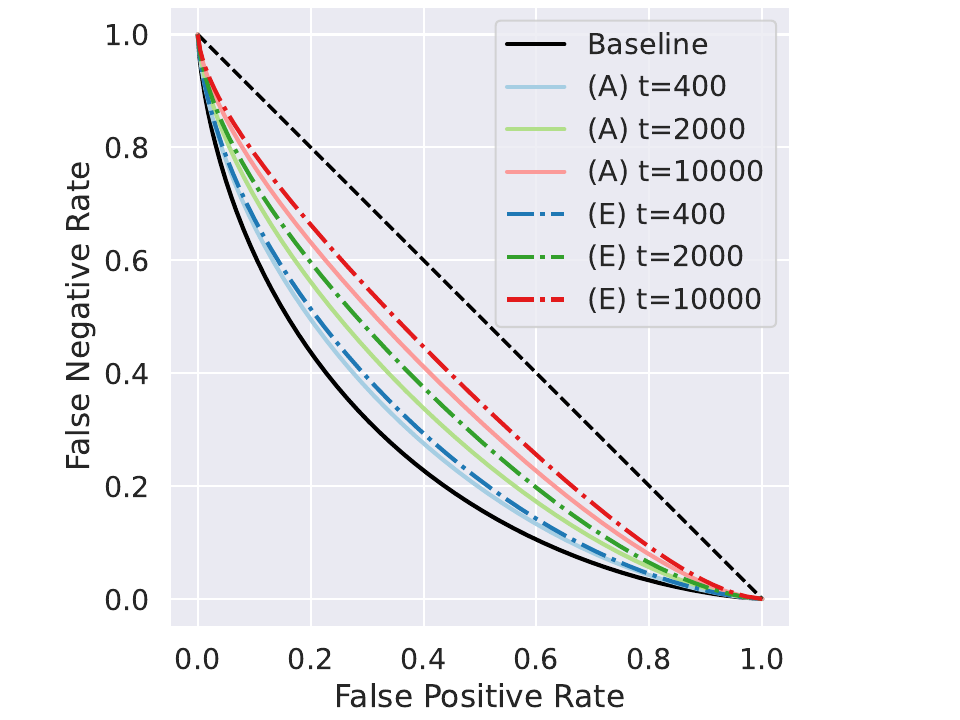}
    \includegraphics[width=0.49\textwidth]{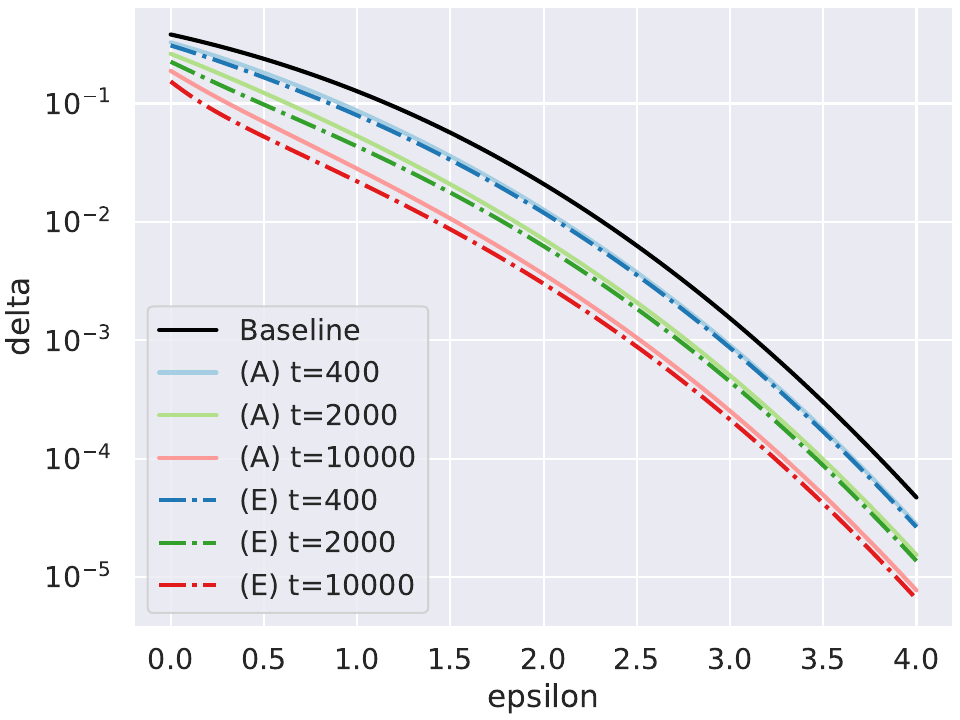}
        \caption{Equivalent representations of worst-case privacy amplification for RW-AdaBatch with parameters $\mu=1$, $n=25$, $\Delta=\sqrt{n}$, $\alpha=0.01$. The baseline corresponds to the $G_\mu$ tradeoff curve, solid lines correspond to the \textbf{A}nalytic upper bound, and dash-dotted lines correspond to \textbf{E}mpirical Monte Carlo simulations. The analytic bound is fairly tight, particularly for small $\delta$/low FPR.}
        \label{fig:privacy-amplification}
\end{figure}

\begin{figure*}[h!]
    \centering
    \includegraphics[width=\textwidth]{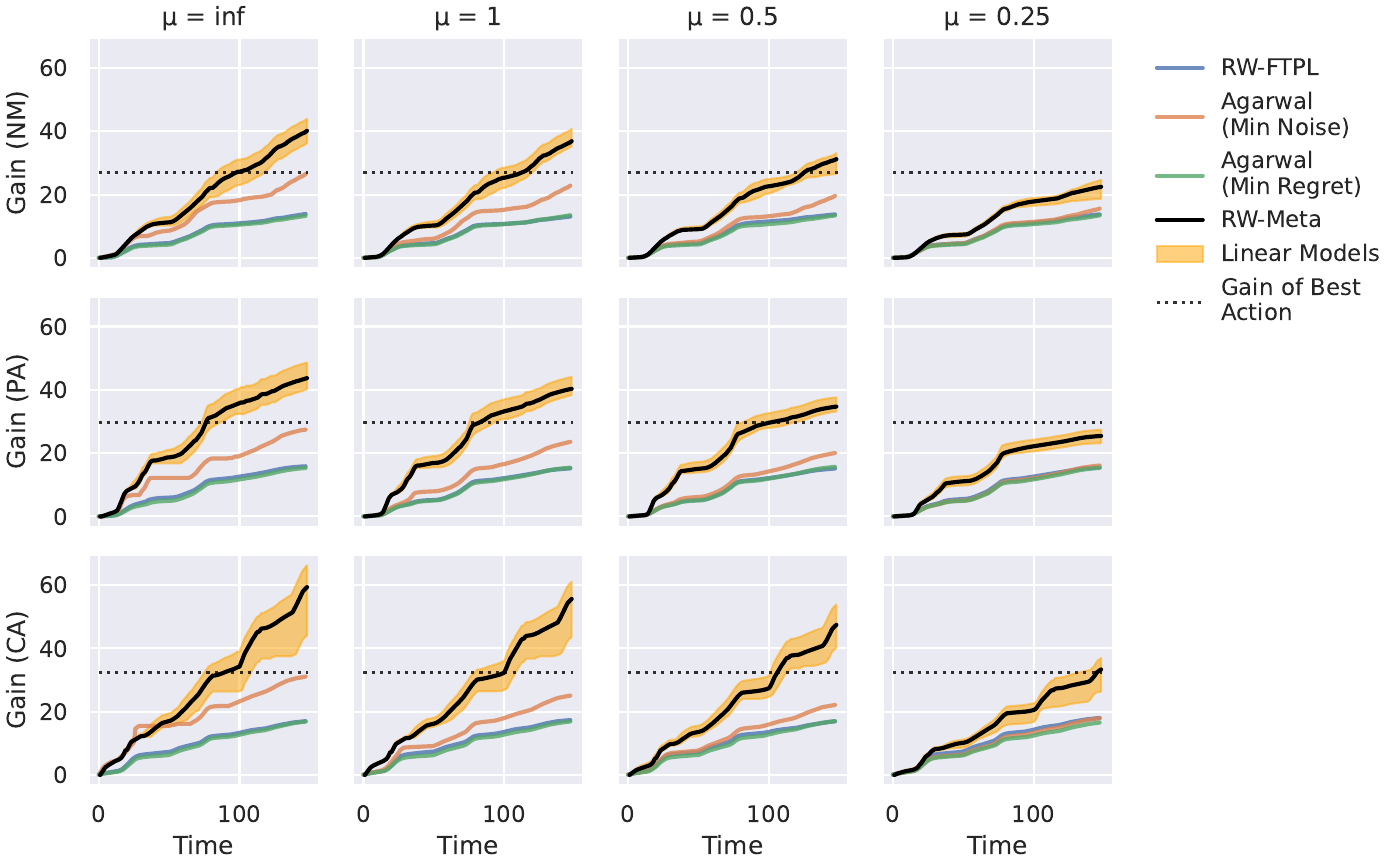}
    \caption{Results of empirical evaluation on COVID-19 hospitalization data, averaged over 100 iterations. 
    %Rows correspond to states while columns correspond to privacy levels. 
    The shaded orange regions enclose the maximum and minimum gain of the 12 rolling regression learners, and the dotted black lines represent the maximum cumulative COVID density of any single hospital in the given state. 
    Static regret can be interpreted as the distance between a learner's total gain and the dotted line, while the regret of RW-Meta corresponds to its distance from the top of the orange envelope. 
    Note that all plots share the same $x$ and $y$ axis.}
    \label{fig:covid}
\end{figure*}

\textbf{RW-Meta.} We evaluate the performance of RW-Meta with real-world data from the COVID-19 pandemic. Specifically, we use data from the United States Department of Health and Human Services~\citep{covid19_hospital_capacity} containing weekly reports from thousands of hospitals from mid-2020 through 2023. For each state, our task is to predict which hospitals will report the highest density of COVID patients in each week. Because we are interested in preserving the privacy of individual patients and not the hospitals themselves, we modify the basic adjacency definition in \autoref{sec:prelims} to define two sequences as adjacent if they differ only in a single individual's participation at a single time step. For example, if every hospital in a state had at least 10 beds, then we would compute sensitivity for that week as $\sqrt{2}/10$, reflecting the possibility that someone might have not have gotten sick, \textit{or} they might have chosen to go to a different hospital. In our dataset, the minimum number of beds ranges between 4.3 and 14.6, with the result that sensitivity is kept manageably low throughout.

Following \citet{altieri2021curating}, we consider models that first forecast the exact proportion of COVID cases in each hospital and then make a final prediction by taking the argmax of their forecast. The gain of a learner is the true proportion of COVID patients in the hospital they select. Our evaluation uses 13 learners consisting of rolling Gaussian regression algorithms with window sizes of 8/16/32/64 and weak/medium/strong regularization, as well as the basic RW-FTPL algorithm. We use Gaussian regression models because they are popular for medical forecasting and their theoretical assumptions match the additive Gaussian noise we use to protect privacy. The range of hyperparameters we consider was chosen manually through exploratory analysis with a disjoint slice of the dataset, guided by the goal of maximizing variety across learners. We evaluate our algorithm on data from three states (New Mexico, Pennsylvania, and California), which were chosen to cover a diverse range of population sizes while being geographically large enough to decorrelate different hospitals.

We remove any hospitals that did not appear to participate in the DHHS's data sharing program, defined as reporting fewer than 100 suspected cases over the 3 year period. This left us with data from 24, 146, and 293 hospitals in each state respectively, covering the 148 weeks between August 9th, 2020, and June 4th, 2023. We evaluate our algorithm under four different privacy levels: no privacy ($\mu=\infty$), low privacy ($\mu=1)$, medium privacy ($\mu=0.5)$ and high privacy ($\mu=0.25)$. 

Because we are not aware of any existing LDP experts algorithms for oblivious adversaries, we compare RW-Meta against the state-of-the-art private experts algorithm of \citet{agarwal2017price}. This comparison must be caveated because their algorithm satisfies CDP and was not designed for dynamic environments; our goal is not to prove that one algorithm is superior, but rather to contextualize the costs of satisfying local vs. central DP and the potential benefits of moving beyond the paradigm of data-independent experts. We consider two choices of parameters for their algorithm: choosing the minimum noise scale required to still satisfy $\mu$-GCDP (Min Noise), and choosing the noise scale to optimize worst-case bounds on regret subject to privacy constraints (Min Regret).

\textbf{Results}. We repeat our evaluation 100 times and report average results, which are visualized in \autoref{fig:covid} and summarized numerically in \autoref{tab:empirical-regret}. Averages are reported alongside 95\% confidence intervals based on the central limit theorem with Bonferroni correction. Across all settings, we find that RW-Meta consistently outperforms all other algorithms, followed by Min Noise, despite the latter only satisfying CDP. 
This is because RW-Meta is able to consistently achieve \textit{negative} static regret at low and moderate privacy levels, which is essentially impossible to match using data-independent experts, regardless of noise scale.

Separately, we find that RW-Meta performs around 90\% as well as the best linear model in each setting on average. Which exact linear model performs best varies considerably, however. For instance, the learner with window size 8 and strong regularization is the best performing learner in the high privacy setting for the Pennsylvania dataset, but only middle of the pack for New Mexico (where the best learner instead has a window size of 64). Across different privacy levels, the variation is even larger. This heterogeneity highlights the appeal of using meta-learning, as it eliminates the need to commit to a single set of parameters in advance.

\begin{table*}[t]
\footnotesize
    \centering
    \caption{Average gain of learners}
    \begin{tabular}{ c c c c c}
    \toprule
     && \multirow{2}{2cm}{\centering Agarwal\\(Min Noise)} & \multirow{2}{2cm}{\centering Agarwal\\(Min Regret)} &  \\ 
     Privacy Level & RW-Meta & & & Best Linear Model \\ 
       \midrule
       \multicolumn{5}{ c}{New Mexico} \\
       \midrule
       $\mu=\infty$ & 40.14 $\pm 0.30$ & 26.48 & 13.38 $\pm 0.38$ & 43.91 \\
       $\mu=1$ & 36.87 $\pm 0.36$ & 22.83 $\pm 0.37$  & 13.41 $\pm 0.39$ & 40.69 $\pm 0.26$ \\
       $\mu=0.5$ & 31.19 $\pm 0.55$ & 19.56 $\pm 0.42$ & 13.25 $\pm 0.40$ & 34.14 $\pm 0.49$ \\
       $\mu=0.25$ & 22.46 $\pm 1.04$ & 15.58 $\pm 0.43$ & 13.38 $\pm 0.43$ & 25.85 $\pm 1.26$ \\
       \midrule
       \multicolumn{5}{c}{Pennsylvania} \\
       \midrule
       $\mu=\infty$ & 43.70 $\pm 0.41$ & 27.42 & 15.51 $\pm 0.37$ & 48.61 \\
       $\mu=1$ & 40.32 $\pm 0.50$ & 23.55 $\pm 0.44$ & 15.38 $\pm 0.37$ & 44.30 $\pm 0.39$ \\
       $\mu=0.5$ & 34.71 $\pm 0.68$ & 20.03 $\pm 0.44$ & 15.64 $\pm 0.40$ & 38.06 $\pm 0.57$ \\
       $\mu=0.25$ & 25.44 $\pm 0.87$ & 16.09 $\pm 0.42$ & 15.35 $\pm 0.38$ & 28.73 $\pm 0.82$ \\
       \midrule
       \multicolumn{5}{c}{California} \\
       \midrule
       $\mu=\infty$ & 59.36 $\pm 0.47$ & 31.12 & 16.68 $\pm 0.36$ & 66.23 \\
       $\mu=1$ & 55.62 $\pm$ 0.47 & 25.09 $\pm$ 0.51 & 16.91 $\pm$ 0.34 & 61.16 $\pm 0.29$ \\
       $\mu=0.5$ & 47.43 $\pm$ 0.81 & 22.14 $\pm$ 0.48 & 16.82 $\pm$ 0.41 & 54.21 $\pm 0.56$ \\
       $\mu=0.25$ & 33.34 $\pm$ 1.30 & 17.93 $\pm$ 0.41 & 16.84 $\pm$ 0.35 & 38.43 $\pm 1.30$ \\
       \bottomrule
    \end{tabular}
    \label{tab:empirical-regret}
\end{table*}

\noindent\textbf{Conclusion.}
We have presented two algorithms for the fundamental problem of prediction with expert advice, both of which satisfy LDP. In static environments, our RW-AdaBatch algorithm is a costless upgrade over the classical algorithm we build off of, improving the privacy of its outputs with provably insignificant impact on utility. In dynamic environments, our RW-Meta algorithm uses a novel, privacy-preserving technique to dynamically adapt to shifts in the input data, and we show that the resulting improvements in performance can be won at no additional privacy cost. Our theoretical results are supplemented by an empirical evaluation showing that our algorithms can achieve high performance in a real-world, privacy-critical prediction task.

% \section*{References}
\bibliography{private_experts}
\bibliographystyle{plainnat}

%%%%%%%%%%%%%%%%%%%%%%%%%%%%%%%%%%%%%%%%%%%%%%%%%%%%%%%%%%%%

\appendix

\section{Appendix}

\subsection{Complexity Analysis}

In the worst case, RW-AdaBatch requires us to find the root of a smooth monotonic function at each time step. Crucially, however, the size of the problem is constant and does not depend on $n$ or $T$, and so the asymptotic complexity of RW-AdaBatch is only $O(nT)$. We describe some of our practical techniques for improving the efficiency of this step in the following section.

RW-Meta algorithm requires $O(m^2 + mn)$ memory to store the predictions of the learners at each round as well as the $\Sigma$ matrix. Similarly, we require $O(m^2 + mn)$ operations per iteration to compute $\Sigma^*$ and $X_t\tilde{g}_t$. It is also necessary to choose a specific technique to compute $\lambda_{\textit{max}}(\Sigma^*)$: in our implementation, we use the LOBPCG algorithm, which enjoys linear convergence and requires solving a $3 \times 3$ eigenproblem at each iteration \citep{knyazev2001toward}. We can control the total number of iterations by warm-starting with the leading eigenvector of the previous iteration, which is guaranteed to be within $O(m)$ of the new maximum eigenvalue.

\subsection{Parameter Choice}

All three of the algorithms we study require choosing a noise scale $\eta$. This can be done by computing the minimum noise scale required to guarantee privacy, i.e. $\eta = \Delta/\mu$, which typically works well on realistic data. To optimize worst-case performance for RW-FTPL or RW-AdaBatch, one can instead choose $\eta = \max(\sqrt{2}, \Delta/\mu)$. 

In our early design process, we investigated several different choices of $\alpha$ for RW-AdaBatch and found that it seemed to have little practical impact on the behavior of RW-AdaBatch. This is not necessarily surprising --- the random variables we are trying to control all have sub-Gaussian tails, resulting in failure probabilities that are exponentially tiny for most batch sizes. We take $\alpha=0.01$ for simplicity in our experiments.

\subsection{Implementation Details}\label{app:implementation}

All experiments are carried out on a laptop running Ubuntu 22.04 with an intel i5-1135G7 CPU and 16GB of RAM. We implement our algorithms using Python 3.10, numpy 2.0.0 \citep{harris2020array}, scipy 1.14.0 \citep{2020SciPy-NMeth}, and mpmath 1.3.0 \citep{mpmath}.

Although RW-FTPL is computationally straightforward, Both RW-Meta and RW-AdaBatch include non-trivial computations as subroutines which must be implemented efficiently. We briefly describe our implementation choices here.

In the case of RW-Meta, we use the LOBPCG algorithm as implemented in scipy to find the leading eigenvalue of $\Sigma_t^*$ at each iteration, with the leading eigenvector from the previous round as an initial guess. We find that at the problem sizes we consider, this step is not a significant bottleneck (taking about 2ms per iteration), and we therefore do not pursue any further optimizations.

In the case of RW-AdaBatch, we take advantage of the fact that the bound from \autoref{thm:stability} is a monotonic function of $\beta$ by pre-computing many input/output pairs. To avoid issues with floating point precision when computing extreme values of the Gaussian PDF/CDF, we use the mpmath library for this task, which supports arbitrary precision arithmetic and numerical integration. We then interpolate the result with a monotonic cubic spline, which can be used to closely approximate the required value of $\beta$ necessary to achieve a given failure probability $\delta$. In practice, the discrete nature of the choice of batch size means that the small differences between the true inverse function and the interpolated approximation are insignificant, while the corresponding speed-up is dramatic.

Finally, prior work has shown that straightforward DP implementations are often vulnerable to attacks that take advantage of the idiosyncrasies of floating point numbers, leading to catastrophic privacy failures \citep{mironov2012significance}. We therefore employ the secure random sampling method of \citet{holohan2021secure} which renders these attacks computationally prohibitive.

\subsection{Limitations and Future Work}

\subsubsection{Applicability of Local DP}

The primary limitation of our work is that satisfying local DP sometimes requires adding unreasonably large amounts of noise, particularly when gain vectors are high-dimensional. To guarantee low regret, our approach requires gain vectors with sensitivity smaller than their literal dimension, or else a silo-like setting where several sensitive values can be averaged locally before being sent to the algorithm, but this may not always be achievable in practice.

\subsubsection{Extension to More Complex Learning Problems}

While the experts problem is flexible enough to represent many real world tasks, it is also interesting from a theoretical perspective because it is one of the most fundamental problems in online learning. This naturally raises the question of whether the methods developed in this work can be extended to more complicated learning problems.

With respect to metalearning, RW-Meta fundamentally requires a linear gain function to compute unbiased estimates for the gain of each learner. So, while it can be extended to online linear optimization problems beyond prediction with expert advice, it is not clear that the same method could be generalized to the larger class of online convex optimization. % On the other hand, it may be possible to extend RW-Meta to partial information settings such as linear bandits where it is possible to derive unbiased estimates of (linear functions of) the expected reward at each time step.

Meanwhile, RW-AdaBatch relies on the property that the maximum of a linear function over a polyhedral set is always attained at one of those points. It would therefore be difficult to extend it to settings like online linear optimization over the sphere where the action set lacks this structure. On the other hand, the same property holds for maximums of convex functions over polyhedral sets, and so it is possible that the ideas behind RW-AdaBatch could be extended to some instances of online convex optimization.

\subsection{Proof of \autoref{thm:stability}}\label{app:thm1-proof}

The formal statement of \autoref{thm:stability} is as follows:

    \textit{Let $x_0,x_1,\ldots,x_B \in \mathbb{R}^n$ be a Gaussian random walk with $x_0 = v$ and $x_{t+1}-x_t \sim \mathcal{N}(0,\eta^2 I_n)$. If $v$ has gap $k$, then the probability that the leader changes at any point during the random walk is at most $2\Phi( -\sqrt{2}\beta) + 2\sqrt{\pi} \varphi( -\beta ) \big[\Phi(\beta) - \Phi(-\beta) \big]$, where $\beta = k/(\eta \sqrt{2B}) - \sqrt{\log(2n-2)}$. The same is true if $v$ has gap $k+\kappa$ and we wish to bound the probability that the gap ever dips below $\kappa$.}
    
The proof of the theorem requires the following fundamental results about the extrema of Gaussian processes:

\begin{lemma}\label{lemma:half-normal}
    Let $z_1,\ldots,z_B \sim \mathcal{N}(0,\eta^2)$, and let $S_t = \sum_{i=1}^t z_i$. Then:
    \begin{equation}
        \max_{t} S_t \leq_{st} |S_B|
    \end{equation}
\end{lemma}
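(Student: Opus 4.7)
The plan is to prove this via the classical reflection principle for symmetric random walks, using the strong Markov property together with the symmetry of Gaussian increments. Since $\leq_{st}$ requires showing $\mathbb{P}[\max_t S_t > a] \leq \mathbb{P}[|S_B| > a]$ for every $a$, the nontrivial case is $a > 0$; for $a \leq 0$ both tail probabilities equal $1$ (noting $|S_B| \geq 0$ almost surely, with continuous density, so $\mathbb{P}[|S_B| > a] = 1$ for $a < 0$).

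For $a > 0$, I would introduce the hitting time $\tau = \inf\{t \geq 1 : S_t \geq a\}$ and observe that $\{\max_{t \leq B} S_t \geq a\} = \{\tau \leq B\}$. The key step is to argue that conditional on $\tau \leq B$, the endpoint is at least as likely to lie above $a$ as below. Concretely, by the strong Markov property for the Gaussian random walk, conditional on $\tau = s$ and $S_\tau = b$, the increment $S_B - S_\tau$ is an independent sum of $B - s$ i.i.d. $\mathcal{N}(0,\eta^2)$ variables, hence symmetric about $0$. Since $S_\tau \geq a$ on $\{\tau \leq B\}$, we have $a - S_\tau \leq 0$, so
\begin{equation*}
\mathbb{P}[S_B \geq a \mid \tau \leq B] \;=\; \mathbb{P}[S_B - S_\tau \geq a - S_\tau \mid \tau \leq B] \;\geq\; \tfrac{1}{2}.
\end{equation*}

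Combining this with the containment $\{S_B \geq a\} \subseteq \{\tau \leq B\}$ gives
\begin{equation*}
\mathbb{P}[S_B \geq a] \;=\; \mathbb{P}[S_B \geq a,\; \tau \leq B] \;\geq\; \tfrac{1}{2}\,\mathbb{P}[\tau \leq B] \;=\; \tfrac{1}{2}\,\mathbb{P}[\max_t S_t \geq a].
\end{equation*}
Rearranging and invoking symmetry of $S_B \sim \mathcal{N}(0, B\eta^2)$ yields
\begin{equation*}
\mathbb{P}[\max_t S_t \geq a] \;\leq\; 2\,\mathbb{P}[S_B \geq a] \;=\; \mathbb{P}[|S_B| \geq a],
\end{equation*}
which is the desired stochastic dominance.

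I don't expect a real obstacle here: this is a textbook application of reflection, and the Gaussianity of the increments is used only to guarantee symmetry of the post-$\tau$ segment. The only minor care required is handling the boundary case $a \leq 0$ and being explicit that the strong Markov property applies at the stopping time $\tau$ for this discrete-time walk, which follows because $\tau$ is a stopping time with respect to the natural filtration and the increments are i.i.d.
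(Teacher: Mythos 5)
Your proof is correct, but it takes a genuinely different route from the paper's. The paper's proof couples the discrete walk with a continuous-time Brownian motion of scale $\eta$ on $[0,B]$: the discrete walk is the BM sampled at integer times, so its running maximum is dominated by the BM's running maximum, whose distribution is exactly half-normal with scale $\eta\sqrt{B}$ by the classical reflection principle for Brownian motion (the paper cites this as known). Your argument instead proves the discrete reflection bound from scratch, using the strong Markov property at $\tau$ and the symmetry of the post-$\tau$ increments to establish $\mathbb{P}[\max_t S_t \geq a] \leq 2\,\mathbb{P}[S_B \geq a]$ directly. The two routes reach the same half-normal bound; the paper's is shorter and leans on a standard fact about BM extrema, while yours is self-contained and stays entirely in discrete time, which some readers may find more transparent. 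One cosmetic nit: for $a=0$ the left-hand side $\mathbb{P}[\max_t S_t > 0]$ need not equal $1$, only the right-hand side $\mathbb{P}[|S_B|>0]=1$ does; but since you only need $\leq$, the inequality is trivially satisfied there, so this does not affect correctness. Also implicit but fine is the degenerate case $\tau=B$, where the post-$\tau$ increment is $0$ and the conditional probability is $1\geq\tfrac12$.
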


\begin{proof}
    Instead of considering the discrete random walk, we consider the continuous analogue in one-dimensional Brownian motion over $[0,B]$ with scale $\eta$. The maximum value attained by the discrete random walk is at most as large as the maximum value attained by the Brownian motion. The latter is known to follow a half-normal distribution with scale $\eta \sqrt{B}$ (see e.g. Section 3 of \citet{majumdar2020extreme}), which is also the distribution of $|S_B|$.
\end{proof}

We additionally need the well-known Borell-TIS inequality, which states that the maximum of Gaussians concentrates closely around its expected value~\citep{adler2007gaussian}:

\begin{lemma}[Borell-TIS]\label{lemma:borell-tis}
    Let $\lbrace f_t \rbrace$ be a centered Gaussian process on $T$. Denote $\lVert f \rVert = \sup_{t\in T} f_t$ and $\sigma_T = \sup_{t\in T} \sigma_t$. Then for any $u \geq 0$,
    \begin{equation}
        \mathbb{P}(\lVert f \rVert > \mathbb{E}[\lVert f \rVert] + u) \leq \exp\Big( \frac{-u^2}{2\sigma_T^2} \Big)
    \end{equation}
\end{lemma}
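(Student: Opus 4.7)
The plan is to prove Borell--TIS by reducing the supremum of the Gaussian process to a Lipschitz function of a standard Gaussian vector, and then invoking the classical Gaussian concentration inequality for Lipschitz functions. First I would handle measurability and reduce to the finite case: by separability of the process (which may be assumed without loss of generality by passing to a separable modification), there is a countable dense $T_0 \subset T$ along which the sup is attained almost surely, and by monotone convergence it suffices to bound $\mathbb{P}(\sup_{t \in T_n} f_t > \mathbb{E}[\sup_{t \in T_n} f_t] + u)$ for an increasing sequence of finite sets $T_n \uparrow T_0$. So fix a finite index set $T = \{t_1, \ldots, t_N\}$ and let $\Sigma$ be the covariance matrix of $(f_{t_1}, \ldots, f_{t_N})$.

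Next I would write the process in Gaussian white-noise form. With $A = \Sigma^{1/2}$ and $g \sim \mathcal{N}(0, I_N)$, the vector $Ag$ is equal in distribution to $(f_{t_1},\ldots,f_{t_N})$, so $\lVert f \rVert$ has the same law as $F(g) := \max_i (Ag)_i$. A direct calculation shows $F$ is Lipschitz with constant at most $\sigma_T$: for any $g, g' \in \mathbb{R}^N$,
\begin{equation*}
F(g) - F(g') \leq \max_i \langle A_i, g - g' \rangle \leq \max_i \lVert A_i \rVert_2 \cdot \lVert g - g' \rVert_2,
\end{equation*}
where $A_i$ is the $i$-th row of $A$, and $\lVert A_i \rVert_2^2 = (AA^\top)_{ii} = \Sigma_{ii} = \sigma_{t_i}^2 \leq \sigma_T^2$. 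Symmetrizing gives $|F(g) - F(g')| \leq \sigma_T \lVert g - g'\rVert_2$.

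The heart of the argument, and the main obstacle, is then the Gaussian concentration inequality for Lipschitz functions: if $F : \mathbb{R}^N \to \mathbb{R}$ is $L$-Lipschitz and $g \sim \mathcal{N}(0, I_N)$, then $\mathbb{P}(F(g) > \mathbb{E}[F(g)] + u) \leq \exp(-u^2 / (2L^2))$. I would prove this via Herbst's argument from the Gaussian log-Sobolev inequality: the standard Gaussian on $\mathbb{R}^N$ satisfies $\mathrm{Ent}(h^2) \leq 2\, \mathbb{E}[\lVert \nabla h \rVert_2^2]$ for all smooth $h$ (this is tensorized from the one-dimensional case, which in turn follows from integration by parts for the Ornstein--Uhlenbeck semigroup). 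Applying the log-Sobolev inequality with $h = e^{\lambda F/2}$ to the smooth mollification of $F$ and using $\lVert \nabla F \rVert_2 \leq L$ a.e.\ gives, for $H(\lambda) = \mathbb{E}[e^{\lambda(F - \mathbb{E}[F])}]$, the differential inequality $\lambda H'(\lambda) - H(\lambda) \log H(\lambda) \leq \lambda^2 L^2 H(\lambda)/2$. Dividing by $\lambda^2 H(\lambda)$ and integrating from $0$ to $\lambda$ (the boundary term vanishes since $\log H(\lambda)/\lambda \to 0$ as $\lambda \to 0^+$) yields $\log H(\lambda) \leq \lambda^2 L^2/2$, i.e.\ sub-Gaussian moment generating function. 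A standard Chernoff bound optimized in $\lambda$ then gives the desired exponential tail.

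Chaining these three steps together --- reduction to finite index sets, representation as a Lipschitz function with constant $\sigma_T$, and the Lipschitz-Gaussian concentration bound --- yields the Borell--TIS inequality. The genuinely hard step is the log-Sobolev inequality itself; once that is in hand, the rest is mechanical bookkeeping, and the only subtlety is the approximation of the (non-smooth) maximum by smooth Lipschitz functions, which is handled by standard mollification since the Lipschitz constant is preserved in the limit.
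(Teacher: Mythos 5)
Your proposal is correct, but it is worth noting that the paper does not prove this lemma at all: Borell--TIS is invoked as a known external result, cited to Adler and Taylor, and used as a black box in the proof of Theorem~1. What you have written is a genuine self-contained proof along the standard modern route: (i) reduce to finite index sets by separability and monotone convergence, (ii) represent the finite-dimensional maximum as $F(g)=\max_i(\Sigma^{1/2}g)_i$ with $g\sim\mathcal{N}(0,I_N)$ and check that $F$ is $\sigma_T$-Lipschitz since $\lVert (\Sigma^{1/2})_i\rVert_2^2=\Sigma_{ii}=\sigma_{t_i}^2$, and (iii) apply Gaussian concentration for Lipschitz functions, proved via Herbst's argument from the Gaussian log-Sobolev inequality. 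All three steps are sound as sketched: the Lipschitz bound, the differential inequality $\lambda H'-H\log H\le \tfrac{1}{2}\lambda^2L^2H$, and the integration giving $\log H(\lambda)\le \lambda^2L^2/2$ are the standard bookkeeping, and your handling of the limiting argument and of mollifying the non-smooth maximum is adequate. Compared with the original isoperimetric proofs of Borell and Tsirelson--Ibragimov--Sudakov (which bound deviations from the median via the Gaussian isoperimetric inequality and then pass to the mean), the log-Sobolev route you take yields the mean-centered form directly with the same constant, at the price of needing the log-Sobolev inequality itself; for the purposes of this paper either formulation suffices, since the lemma is only consumed downstream in bounding the probability that the random-walk leader changes.
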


Finally, we have the following standard upper-bound on the expected maximum of Gaussians based on moment-generating functions:
\begin{lemma}\label{lemma:mgf-maximum}
    Let $z_1,\ldots,z_n$ be (not-necessarily independent) random variables such that $z_i \sim \mathcal{N}(0,\sigma_i^2)$. Let $Z = \max_i z_i$. Then:
    \begin{equation}
        \mathbb{E}[Z] \leq \max_i \sigma_i \sqrt{2\log n}
    \end{equation}
\end{lemma}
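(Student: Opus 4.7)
The plan is a standard MGF / Chernoff argument and is quite short. Write $\sigma := \max_i \sigma_i$ and let $\lambda > 0$ be a free parameter to be optimized. First apply Jensen's inequality to the convex function $x \mapsto e^{\lambda x}$ to obtain $\exp(\lambda \mathbb{E}[Z]) \leq \mathbb{E}[\exp(\lambda Z)]$. Since the exponential is monotone, $\mathbb{E}[e^{\lambda Z}] = \mathbb{E}[\max_i e^{\lambda z_i}]$, and since the summands are non-negative one can upper-bound the maximum by the sum: $\mathbb{E}[\max_i e^{\lambda z_i}] \leq \sum_i \mathbb{E}[e^{\lambda z_i}]$.

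Next, plug in the Gaussian MGF $\mathbb{E}[e^{\lambda z_i}] = e^{\lambda^2 \sigma_i^2 / 2} \leq e^{\lambda^2 \sigma^2 / 2}$, giving $\mathbb{E}[e^{\lambda Z}] \leq n\, e^{\lambda^2 \sigma^2 / 2}$. Taking logarithms and dividing by $\lambda$ yields $\mathbb{E}[Z] \leq \log n / \lambda + \lambda \sigma^2 / 2$, and optimizing over $\lambda > 0$ (the minimizer is $\lambda^* = \sqrt{2\log n}/\sigma$) produces the stated bound $\mathbb{E}[Z] \leq \sigma\sqrt{2\log n}$.

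This is a textbook Chernoff calculation, so there is essentially no technical obstacle. The one conceptual point worth emphasizing---and seemingly the reason the lemma is stated in its present form---is that joint independence of the $z_i$ is never used: the inequality $\max_i e^{\lambda z_i} \leq \sum_i e^{\lambda z_i}$ holds deterministically for non-negative summands, and only the \emph{marginal} Gaussian distribution of each $z_i$ is needed for the MGF identity. That is exactly what will let us apply the lemma to heavily correlated Gaussian families, such as the $2(n-1)$-element signed-coordinate families that arise in the proof of \autoref{thm:stability}.
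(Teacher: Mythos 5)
Your proof is correct and follows essentially the same MGF/Chernoff argument as the paper (Jensen, bound the max of nonnegative exponentials by the sum, plug in the Gaussian MGF, optimize over $\lambda$). The only difference is that you explicitly display the optimal $\lambda^* = \sqrt{2\log n}/\sigma$ and spell out why independence is never used, both of which the paper leaves implicit.
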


\begin{proof}
    We have that:
    \begin{align*}
        \exp(t\mathbb{E}[Z])
        &\leq \mathbb{E}[e^{tZ}] \\
        &\leq \sum_{i\in[n]} \mathbb{E}[e^{tz_i}] \\
        &= \sum_{i\in[n]} \exp(\sigma_i^2 t^2/2) \\
        &\leq n \exp(\max_i \sigma_i^2 t^2/2)
    \end{align*}

    Taking log of both sides and choosing $t$ to minimize the upper bound gives the desired result.
\end{proof}

Now: suppose that at a given time step, the gap of $\tilde{G}$ is $k$. Our goal is to bound the probability that our algorithm's output will change in the next $B$ time steps. Because the algorithm is maximizing a linear objective function, it will always output one of the vertices of the probability simplex. Without loss of generality, assume that the current time step is 0 and that the current leader has index 1, and write $S_{t,i} = \sum_{s=1}^{t} (\tilde{g}_t)_i$. Then the following is a necessary condition for our algorithm's prediction to change:
\begin{equation}
    \max_{j>1} \max_{t\in[B]} S_{t,j} > k + \min_{t\in[B]} S_{t,1}
\end{equation}

\autoref{lemma:half-normal} tells us that each maximum over $t$ is stochastically smaller than a half-normal distribution with scale $\eta\sqrt{B}$, and by symmetry the minimum on the right hand side is stochastically larger than a negative half-normal. We can write a half-normal random variable as the maximum of a Gaussian random variable and its negation, and so \autoref{lemma:mgf-maximum} lets us bound the expectation of the left-hand side with $\eta \sqrt{2B\log(2n-2)}$. Denote this quantity as $E$. Then, assuming that $k > E$, we can use \autoref{lemma:borell-tis} along with the independence of each coordinate to upper bound the probability of our event with:
\begin{align*}
    &\int_{-\infty}^0 p(\min_t S_{t,1} = z) \mathbb{P}(\max_{t,j>1} S_{t,j} > k+z)~dz \\
    &= 2\Phi\Big( \frac{E-k}{\eta \sqrt{B}} \Big)
    + 2\int_{0}^{k-E} \frac{1}{\eta \sqrt{B}} \varphi\Big( \frac{E-k+u}{\eta \sqrt{B}} \Big) \exp\Big( \frac{-u^2}{2B\eta^2} \Big) ~du
\end{align*}

Here, the first term represents the probability that the current leader dips below the expected maximum value of the $n-1$ remaining actions, and the second term represents the probability that one of those $n-1$ actions manages to overtake the current leader regardless. Using the definition of the Gaussian PDF, we can rewrite the second term as:
\begin{align*}
    &\exp\Big( \frac{-(E-k)^2}{4B\eta^2} \Big) 
    \int_{0}^{k-E} 
    \frac{1}{\eta \sqrt{B}}
    \varphi\Big( \frac{2u+(E-k)}{\eta \sqrt{2B}} \Big) ~du \\
     &=  \frac{1}{\sqrt{2}} \exp\Big( \frac{-(E-k)^2}{4B\eta^2} \Big) 
    \int_{(E-k)/(\eta \sqrt{2B})}^{(k-E)/(\eta \sqrt{2B})} \varphi(y) ~dy \\
    &= \frac{1}{\sqrt{2}} \exp\Big( \frac{-(E-k)^2}{4B\eta^2} \Big) \Big[\Phi\Big( \frac{k-E}{\eta \sqrt{2B}} \Big) - \Phi\Big( \frac{E-k}{\eta \sqrt{2B}} \Big) \Big]
\end{align*}

Write $\beta = (k-E)/(\eta\sqrt{2B})$. Then the entire upper bound can be simplified as:
\begin{equation}\label{eq:probbound}
    2\Phi(-\sqrt{2}\beta) + 2 \sqrt{\pi} \varphi(-\beta)\big[\Phi(\beta) - \Phi(-\beta) \big]
\end{equation}

which proves the theorem.\qed

\subsection{ComputeDelay Subroutine}\label{app:compute-delay}

This appendix contains the explicit computation used by RW-AdaBatch to compute the size of the next batch:

\begin{algorithm}[H]
\caption{Subroutine to compute delays}
\label{alg:computedelay} 
    \begin{algorithmic}   
        \Require Noise scale $\eta$, gap $k$, dimension $n$, tolerance $\alpha$
        \State $E \gets \sqrt{\log(2n-2)}$
        \State $U_1(B) \coloneq 2\Phi\Big(\frac{B-k}{\eta\sqrt{B}}+\sqrt{2}E\Big)$
        \State $U_2(B) \coloneq 2\sqrt{\pi}\varphi\Big(\frac{k-B}{\eta\sqrt{2B}}-E\Big)$
        \State $U_3(B) \coloneq \Phi\Big(\frac{k-B}{\eta\sqrt{2B}}-E\Big)-\Phi\Big(\frac{B-k}{\eta\sqrt{2B}}+E\Big)$
        \State $\delta_t(B) \coloneq \alpha \sqrt{\frac{\log n}{t+B}}$
        \State $B \gets \texttt{FindRoot}\big(U_1(B) + U_2(B)U_3(B) - \delta_t(B) \big)$
        \State \Return $\max(0, \texttt{Floor}(B))$
    \end{algorithmic}
\end{algorithm}

\subsection{Proof of \autoref{thm:monotonicity}}

The formal statement of \autoref{thm:monotonicity} is as follows:

    \textit{Let $S_\varepsilon = \lbrace v \in \mathbb{R}^n: v_{(n)} - v_{(n-1)} \leq \varepsilon \rbrace$ and let $\gamma$ denote the standard Gaussian measure on $\mathbb{R}$. Then for any vector $\mu \in \mathbb{R}^n$, $\gamma^n(S_\varepsilon) \geq \gamma^n(S_\varepsilon - \mu)$}.

    The set $S_\varepsilon$ is invariant under permutations and shifts in the $\vec 1$ direction, and so we can assume without loss of generality that $\mu_1 \geq \mu_2 \geq \ldots \geq \mu_n = 0$. It is easiest to begin by finding the measure of the complement of our set, which is a union of disjoint sets where one element is the leader and the gap is greater than $\varepsilon$. This measure is given by:
    \begin{align*}
        f(\mu) &= \gamma^n((S_\varepsilon-\mu)^C) \\
        &= \sum_{i=1}^n \int_{-\infty}^\infty \varphi(x-\mu_i) \prod_{j \neq i} \Phi(x-\varepsilon-\mu_j) ~dx
    \end{align*}

    Our strategy is to show that the partial derivative of $f$ with respect to $\mu_1$ is non-negative. To do so, we can use the fact that each of our disjoint sets is invariant under shifts in the $\vec 1$ direction, and so the gradient of each summand with respect to $\mu$ must be orthogonal to $\vec 1$. So, we can write our partial derivative as:
    \begin{align*}
    &\quad\sum_{i=2}^n \int_{-\infty}^\infty \varphi(x-\mu_1) \varphi(x-\varepsilon-\mu_i)
    \cdot \prod_{j \neq 1,i} \Phi(x-\varepsilon-\mu_j) dx \\
    &-\sum_{i=2}^n \int_{-\infty}^\infty \varphi(x-\varepsilon-\mu_1) \varphi(x-\mu_i)
    \cdot \prod_{j \neq 1,i} \Phi(x-\varepsilon-\mu_j) dx
    \end{align*}

Combining like terms, this gives us:
\begin{align*}
    &\sum_{i=2}^n \int_{-\infty}^\infty
    \varphi(x-\mu_1) \varphi(x-\mu_i) ~ \exp\Big(\varepsilon x - \frac{1}{2}\varepsilon^2 \Big) \\
    &~\cdot [\exp(-\varepsilon \mu_i) - \exp(-\varepsilon \mu_1)] ~ \prod_{j \neq 1,i} \Phi(x-\varepsilon-\mu_j) ~dx
\end{align*}

Then, since $\mu_1 \geq \mu_i$ and $\varepsilon>0$, the term inside the brackets is non-negative. Therefore, as a sum of integrals of products of non-negative values, the whole expression is non-negative.

From here, the fact that $S_\varepsilon$ is closed under permutations means that, if $\mu_1 = \mu_i$ for some $i$, then they have the same partial derivative. So, we can explicitly construct a path from any arbitrary $\mu$ to the origin along which $f(\mu)$ is non-increasing: first reduce $\mu_1$ until it equals $\mu_2$, then reduce both until they equal $\mu_3$, and so on. This construction shows that the function is globally optimized at $\mu=0$, as desired. \qed

\begin{remark}
    In the special case where $n=2$, $S_\varepsilon$ is a convex set and the statement can be proved directly using Anderson's theorem~\citep{anderson1955integral} or its extension by Marshall and Olkin for Schur-concave sets~\citep{marshall1974majorization}. When $n>2$, however, the set is neither convex nor Schur-concave, necessitating the more explicit analysis here.
\end{remark}

\begin{remark}
    The informal statement of the theorem in the main text follows from this result together with the assumption of an oblivious adversary. The fact that the adversary is oblivious corresponds to the order of quantifiers in the formal theorem statement: first we choose an arbitrary vector $\mu$, and only afterwards do we observe the size of the gap. This is the only context in which the adversarial model is relevant for privacy --- all of our LDP guarantees hold in both models~\citep{denisov2022improved}.
\end{remark}

\subsection{Proof of \autoref{lemma:tradeoff-ineq}}

    The proof follows from a theorem by \citet{blackwell1951comparison}, reproduced as Theorem 2.10 in \citet{dong2019gaussian}, which states that $\mathcal{T}(P,Q) \leq \mathcal{T}(P', Q')$ iff there exists a randomized algorithm $\textsf{proc}$ such that $\textsf{proc}(P) = P'$ and $\textsf{proc}(Q) = Q'$. We will construct such an algorithm.

    Suppose we receive the random input $(s_2, x) \sim Q \times \mathcal{N}(b, s_2^2)$ for some a priori unknown $b$. We first choose $s_1 = F_P^{-1}(F_{Q}(s_2))$. Since $s_2 \sim Q$, we have that $F_{Q}(s_2) \sim \textsf{Unif}(0,1)$ and therefore $s_1 \sim P$. Then, since $P \geq_{st} Q$, we have that $F_{P}^{-1} \geq F_{Q}^{-1}$ and therefore $s_1 \geq s_2$. So, we can sample $z \sim \mathcal{N}(0, s_1^2-s_2^2)$ and release the tuple $(s_1, x + z) \sim P \times \mathcal{N}(b, s_1^2)$, which completes the proof. \qed
    
\subsection{Conversion to $\varepsilon,\delta$-DP}

We prove the following corollary to the privacy analysis in \autoref{sec:adabatch}:

\begin{corollary}\label{cor:approximate-conversion}
    For all $\varepsilon > 0$, RW-AdaBatch satisfies $(\varepsilon, 1 - e^\varepsilon \alpha(\varepsilon) - \beta(\varepsilon))$-DP, with $\alpha$ and $\beta$ defined as in \autoref{lemma:joint-concavity}.
\end{corollary}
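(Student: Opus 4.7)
The plan is to invoke Proposition 2.13 of \citet{dong2019gaussian}, which converts any $f$-DP guarantee into a family of $(\varepsilon, \delta)$-DP guarantees via
\[
\delta(\varepsilon) \;=\; 1 - \inf_{a \in [0,1]} \big\{ f(a) + e^\varepsilon a \big\}.
\]
Chaining \autoref{thm:monotonicity} with \autoref{lemma:tradeoff-ineq} and \autoref{lemma:joint-concavity} already gives a parametric pointwise lower bound on the true tradeoff function $f$ of RW-AdaBatch: namely, $f(\alpha(t,c)) \geq \beta(t,c)$ for every likelihood-ratio threshold $t$ and randomization $c$. It therefore suffices to substitute this bound into the conversion formula above and to pick the pair $(t,c)$ that is optimal for the given $\varepsilon$.

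The first step is to observe that, because the infimum is bounded above by the value of the objective at any single point, we have for every $(t,c)$:
\[
\delta(\varepsilon) \;\leq\; 1 - e^\varepsilon \alpha(t,c) - f(\alpha(t,c)) \;\leq\; 1 - e^\varepsilon \alpha(t,c) - \beta(t,c).
\]
The second step is to choose $(t,c)$ so that the right-hand side is smallest. A standard consequence of the Neyman--Pearson lemma (see Propositions 2.2--2.3 of \citet{dong2019gaussian}) is that the tradeoff curve of a likelihood-ratio test parametrized by threshold $t$ has tangent slope exactly $-t$ at $(\alpha(t,c), \beta(t,c))$, so the upper bound is tightest when $t = e^\varepsilon$. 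The randomization parameter $c$ can be set to zero, since the mixture components in question are all continuous (the proxy gap distribution and the Gaussian noise both have densities, so ties in the likelihood ratio occur with probability zero). Writing $\alpha(\varepsilon) := \alpha(e^\varepsilon, 0)$ and $\beta(\varepsilon) := \beta(e^\varepsilon, 0)$ then gives exactly the claimed bound.

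The main obstacle I expect is a subtle point of rigor rather than a hard calculation: $\beta$ viewed as a function of $\alpha$ is only a pointwise lower bound on the true tradeoff function and is not guaranteed to itself be a valid (convex, decreasing) tradeoff function. Consequently, one cannot naively apply Proposition 2.13 directly to $\beta$ and take its convex conjugate globally. The reduction above sidesteps this difficulty by evaluating the conversion formula at a single parametrized point; this yields a valid upper bound on $\delta$ regardless of the structural properties of $\beta$, and the slope-matching argument enters only to certify that $t = e^\varepsilon$ is the optimal choice. Correctness of the corollary is thus immediate once the parametric lower bound from \autoref{lemma:joint-concavity} is in hand, while the appeal to the Neyman--Pearson characterization is needed only to argue that the resulting bound is tight along the $\beta$ curve.
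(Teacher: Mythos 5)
Your overall plan --- invoke Proposition 2.13, plug in the parametric lower bound $f(\alpha(t,c)) \geq \beta(t,c)$, and identify the optimal threshold through a tangent-slope condition --- is the right one and matches the paper's strategy. But the chain of inequalities you write down has the wrong direction, and this is not cosmetic: the ``single point'' observation does not actually deliver the upper bound on $\delta$ that the corollary requires.

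Concretely, the conversion formula is $\delta(\varepsilon) = 1 - \inf_{a}\{f(a) + e^\varepsilon a\}$. If the infimum is bounded above by the value of the objective at $a = \alpha(t,c)$, then $1 - \inf_a$ is bounded \emph{below} by $1 - f(\alpha(t,c)) - e^\varepsilon\alpha(t,c)$, so your first displayed inequality should read
\[
\delta(\varepsilon) \;\geq\; 1 - e^\varepsilon\alpha(t,c) - f(\alpha(t,c)),
\]
which, after applying $f(\alpha(t,c)) \geq \beta(t,c)$, only tells you that $\delta(\varepsilon)$ is at least something that is in turn at most $1 - e^\varepsilon\alpha(t,c) - \beta(t,c)$ --- no usable bound. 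The error propagates into your second step: under the inequality as you wrote it, the ``tightest'' choice of $(t,c)$ should \emph{minimize} $1 - e^\varepsilon\alpha - \beta$, yet the tangent-matching condition $t = e^\varepsilon$ in fact \emph{maximizes} this expression (it tends to $0$ as $t \to \infty$, to $1 - e^\varepsilon < 0$ as $t \to 0$, and the interior critical point is a maximum), and minimizing over $t$ would produce the absurd conclusion $\delta(\varepsilon) \leq 1 - e^\varepsilon < 0$.

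What is missing is a re-parametrization step, not a single-point evaluation. Because the type-I errors $\alpha(t,c)$ sweep out all of $[0,1]$ as the threshold varies, the infimum over $a$ can be rewritten as an infimum over $(t,c)$, and only then is the lower bound on $f$ applied \emph{inside} the infimum:
\[
\inf_{a\in[0,1]}\{f(a) + e^\varepsilon a\} \;=\; \inf_{t,c}\{f(\alpha(t,c)) + e^\varepsilon\alpha(t,c)\} \;\geq\; \inf_{t,c}\{\beta(t,c) + e^\varepsilon\alpha(t,c)\}.
\]
This yields $\delta(\varepsilon) \leq 1 - \inf_{t,c}\{\beta(t,c) + e^\varepsilon\alpha(t,c)\}$, and \emph{now} the Neyman--Pearson slope condition (equivalently, the explicit derivative computation in the paper's proof) identifies the minimizer of the inner expression as the threshold where the tangent slope equals $-e^\varepsilon$, giving the stated bound. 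Your worry about $\beta$ failing to be a valid convex tradeoff function is well placed, and it is exactly this re-parametrization --- not evaluation at a single point --- that sidesteps it: one never conjugates $\beta$ as a function of $\alpha$, only minimizes the parametric expression over the threshold.
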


The proof relies on the Proposition 2.13 from \citet{dong2019gaussian}, which we reproduce here:

\begin{lemma}[Primal to Dual]\label{lemma:primal-dual}
    Let $f$ be a symmetric trade-off function. A mechanism is $f$-DP if and only if it is $(\varepsilon, \delta(\varepsilon))$-DP for all $\varepsilon \geq 0$ with $\delta(\varepsilon) = 1 + f^*(-e^\varepsilon)$, where:
    \begin{equation}
        f^*(y) = \sup_{-\infty < x < \infty} yx - f(x)
    \end{equation}
    is the convex conjugate of $f$.
\end{lemma}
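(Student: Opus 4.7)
The plan is to translate both directions of the biconditional into pointwise inequalities between $f$ and a standard family of piecewise-linear tradeoff functions, then recognize the resulting tight bound as a Legendre--Fenchel conjugate.

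First I would recall the hypothesis-testing reformulation of traditional DP due to Wasserman and Zhou and made explicit in the $f$-DP paper: a mechanism satisfies $(\varepsilon,\delta)$-DP if and only if its tradeoff function dominates
$f_{\varepsilon,\delta}(\alpha) := \max\bigl\{0,\ 1-\delta-e^\varepsilon\alpha,\ e^{-\varepsilon}(1-\delta-\alpha)\bigr\}$
for all $\alpha\in[0,1]$. Thus $f$-DP together with the promised $\delta(\varepsilon)$ is equivalent to requiring $f(\alpha)\geq f_{\varepsilon,\delta(\varepsilon)}(\alpha)$ for every $\varepsilon\geq 0$ and every $\alpha$.

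Next I would use the hypothesis that $f$ is symmetric, i.e.\ $f=f^{-1}$. Combined with the general fact that any tradeoff function is convex, non-increasing, non-negative, and bounded above by $1-\alpha$, this lets the two affine branches of $f_{\varepsilon,\delta}$ collapse: the branch $e^{-\varepsilon}(1-\delta-\alpha)$ becomes redundant because applying $f=f^{-1}$ to the inequality $f(\alpha)\geq 1-\delta-e^\varepsilon\alpha$ produces it back, and the $0$-branch is automatic. Hence $f\geq f_{\varepsilon,\delta}$ reduces to the single condition $f(\alpha)\geq 1-\delta-e^\varepsilon\alpha$ on $[0,1]$. Rearranging, this is equivalent to $\delta \geq \sup_{\alpha\in[0,1]} \bigl(1-e^\varepsilon\alpha - f(\alpha)\bigr) = 1+\sup_\alpha\bigl((-e^\varepsilon)\alpha - f(\alpha)\bigr) = 1+f^*(-e^\varepsilon)$, where we extend $f$ by $+\infty$ outside $[0,1]$ so that the supremum in the definition of $f^*$ is unaffected. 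The smallest admissible $\delta$ at each $\varepsilon$ is therefore exactly $\delta(\varepsilon)=1+f^*(-e^\varepsilon)$, giving the forward direction.

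For the converse, suppose the mechanism is $(\varepsilon,\delta(\varepsilon))$-DP for every $\varepsilon\geq 0$ with the prescribed $\delta(\varepsilon)$. By the forward characterization this gives $f_{\text{mech}}(\alpha)\geq 1-\delta(\varepsilon)-e^\varepsilon\alpha$ for every $\varepsilon,\alpha$, where $f_{\text{mech}}$ is the mechanism's actual tradeoff function. Taking the supremum over $\varepsilon\geq 0$ on the right and invoking Fenchel--Moreau biconjugation, the envelope of these affine minorants equals $f^{**}=f$ (using that $f$, extended by $+\infty$ off $[0,1]$, is proper, convex, and lower semicontinuous as a valid tradeoff function). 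Hence $f_{\text{mech}}\geq f$, which is $f$-DP. The main obstacle I anticipate is the bookkeeping around the symmetry reduction and verifying that the extended $f$ satisfies the hypotheses of Fenchel--Moreau so that the supremum of affine lower bounds genuinely recovers $f$ rather than just its closed convex hull; once those routine facts about tradeoff functions are in hand, the rest is a direct calculation.
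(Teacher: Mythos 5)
You should first note that the paper does not actually prove this lemma: it is reproduced verbatim as Proposition 2.13 of \citet{dong2019gaussian}, and the paper simply cites it. Your overall strategy — the hypothesis-testing characterization of $(\varepsilon,\delta)$-DP via $f_{\varepsilon,\delta}$ together with conjugate duality / supporting lines — is the same one underlying the original proof, and your forward direction is correct: by definition of $f^*$ the line $1-\delta(\varepsilon)-e^\varepsilon\alpha$ minorizes $f$, symmetry $f=f^{-1}$ yields the reflected branch $e^{-\varepsilon}(1-\delta(\varepsilon)-\alpha)$, and the zero branch is automatic.

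The converse, however, has a genuine gap as written. You retain only the inequality $f_{\mathrm{mech}}(\alpha)\ge 1-\delta(\varepsilon)-e^\varepsilon\alpha$, so the affine minorants you take the supremum over all have slopes $-e^\varepsilon\le -1$. Fenchel--Moreau recovers $f$ as the supremum of affine minorants of \emph{all} slopes; restricted to slopes at most $-1$, the envelope is $\sup_{y\le -1}\,(y\alpha-f^*(y))$, which for a symmetric tradeoff function coincides with $f$ only up to the fixed point $x^*$ with $f(x^*)=x^*$ (where the supporting slope passes through $-1$) and is strictly smaller beyond it whenever $f$ is strictly convex there. For example, with $f=G_\mu$ and $\alpha>x^*$, the best slope-$\le-1$ minorant is the tangent at $x^*$, giving the value $2x^*-\alpha<G_\mu(\alpha)$. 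Hence your argument only establishes $f_{\mathrm{mech}}\ge f$ on $[0,x^*]$, not $f$-DP; and since $f_{\mathrm{mech}}$ need not be symmetric, you cannot reflect this partial conclusion for free. The missing slopes in $[-1,0)$ must come from the second branch $e^{-\varepsilon}(1-\delta(\varepsilon)-\alpha)$ of $f_{\varepsilon,\delta(\varepsilon)}$, which your own first step already provides and which, as you observed in the forward direction, is exactly the reflection of the first branch and hence a supporting line of $f=f^{-1}$. Keeping both branches and taking the supremum over $\varepsilon\ge 0$ does recover $f$ everywhere and repairs the proof; discarding the second branch in the converse is the concrete flaw.
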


In our case, $f$ is only defined on $[0,1]$, so we let $f(x) = \infty$ for $x \not\in [0,1]$ and the supremum is effectively taken over $0 \leq x \leq 1$. 

To find the convex conjugate, we need to find the specific value $\alpha(t)$ that optimizes $y\alpha(t) - \beta(t)$ for a given $y$. To that end, define $h_y(\alpha) = y\alpha - \mathcal{T}(\mathcal{A}(D),\mathcal{A}(D'))(\alpha)$. Then we have that $f^*(y) = h_y(\alpha^*)$, where $\alpha^* = \inf \lbrace \alpha \in [0,1]: 0 \in \partial h_y(\alpha) \rbrace$.

We have that $\alpha(t)$ is differentiable with respect to $t$, so we can compute that:
\begin{align*}
    \frac{d}{dt} h_y(\alpha(t)) 
    &= \frac{d}{dt} \Big( y\alpha(t) - \sum_{b=1}^{m} w_b \beta_b(t) \Big) \\
    &= y\sum_{b=1}^{m} w_b \frac{d}{dt} \alpha_b(t)  - \sum_{b=1}^{m} w_b \frac{d}{dt} \beta_b(t) \\
    &= y \sum_{b=1}^{m} w_b 
    \Big( \frac{-1\sqrt{b}}{\mu} \varphi\Big( \frac{t\sqrt{b}}{\mu} + \frac{\mu}{2\sqrt{b}}\Big) \Big) \\
    &\quad - \sum_{b=1}^{m} w_b \frac{\sqrt{b}}{\mu} 
    \varphi\Big(\frac{t\sqrt{b}}{\mu} - \frac{\mu}{2\sqrt{b}} \Big)
\end{align*}

Define:
\begin{equation*}
    z_b = \frac{t\sqrt{b}}{\mu} - \frac{\mu}{2\sqrt{b}}
\end{equation*}

Then the expression simplifies to:
\begin{align*}
    &\frac{-1}{\mu} \Big(
    \sum_{b=1}^{m} \sqrt{b} w_b y \varphi(z_b + \mu/\sqrt{b}) + \sum_{b=1}^{m} \varphi(z_b) \Big) \\
    &= \frac{-1}{\mu} 
    \sum_{b=1}^{m} \sqrt{b} w_b \varphi(z_b)\Big[1 + y\exp\Big(\frac{-\mu}{\sqrt{b}} z_b - \frac{\mu^2}{2b} \Big) \Big] \\
    &= \frac{-1}{\mu} 
    \sum_{b=1}^{m} \sqrt{b} w_b \varphi(z_b)\big(1 + y\exp(-t)\big)
\end{align*}

Setting this equal to 0 and using the fact that $\varphi, w, b > 0$, we obtain that:
\begin{align*}
    y  \exp(-t) + 1
    &= 0\\
    t &= \ln(-y)
\end{align*}

Denote this last quantity as $t_y$. From this it follows that $f^*(y) = h_y(\alpha(t_y))$, and therefore that our mechanism satisfies $\varepsilon,\delta(\varepsilon)$-DP for all $\varepsilon>0$ and:
\begin{equation*}
    \delta(\varepsilon) = 1 - e^\varepsilon \alpha(t_{-e^\varepsilon}) - \beta(t_{-e^\varepsilon}) = 1 - e^\varepsilon \alpha(\varepsilon) - \beta(\varepsilon)
\end{equation*}

as desired.
\qed

\subsection{Proof of \autoref{thm:rwmeta-regret}}

We formally analyze the regret of RW-Meta using the convex analysis framework of \citet{lee2018analysis}. Let $M(G) = \max_{x\in\mathcal{X}} \langle x, G \rangle$ be the baseline potential function, and for any set of distributions $\lbrace \mathcal{D}_t \rbrace$, define the smoothed potential function $\tilde{M}_t(G) = \mathbb{E}_{z\sim \mathcal{D}_t} M(G+z)$. Finding the expected regret of any randomized FTPL-style algorithm can be reduced to finding the regret of the deterministic algorithm which plays $\mathbb{E}_{z \sim \mathcal{D}_t}[\argmax_{x \in \mathcal{X}} M(G+z)] = \nabla \tilde{M}_t$ at each time step $t$, which by Lemma 3.4 in \citet{lee2018analysis} satisfies the following equality:
\begin{align*}
    \textsf{Regret} = &\sum_{t=1}^T \Big( 
    \underbrace{\big(\tilde{M}_t(G_{t-1}) - \tilde{M}_{t-1}(G_{t-1}) \big)}_{\textit{overestimation penalty}}    \\
    &+ \underbrace{D_{\tilde{M}_t}(G_t, G_{t-1})}_{\textit{divergence penalty}} \Big)
    + \underbrace{M(G_T) - \tilde{M}_T(G_T)}_{\textit{underestimation penalty}}
\end{align*}

Where $D_f(y,x) = f(y) - f(x) - \langle \nabla f(x), y-x \rangle$ is the \textit{Bregman divergence}, which is a measure of how quickly the gradient of $f$ changes. Intuitively, the overestimation penalty represents the error that results when we fool ourselves into believing an action is better than it really is by adding too much noise, and the divergence penalty represents the error that comes from always playing one step behind the real objective function. Because we add zero-mean noise, the underestimation penalty is always negative by Jensen's inequality, and so to prove low regret it suffices to upper bound the first two terms.

For the overestimation penalty, we can use the convexity of $M$ to bound:
\begin{align*}
    &\tilde{M}_t(G_{t-1}) - \tilde{M}_{t-1}(G_{t-1}) \\
    &= \mathbb{E}_{z \sim \mathcal{N}(0,1)}[M(G_{t-1}+\Sigma_{t}^{1/2}z) - M(G_{t-1}+\Sigma_{t-1}^{1/2}z)] \\
    &\leq \mathbb{E}_{z \sim \mathcal{N}(0,1)}[M((\Sigma_{t}^{1/2}-\Sigma_{t-1}^{1/2})z)]
    \end{align*}

From which it follows by telescoping that the entire sum is upper bounded by:
\begin{align*}
    &~\mathbb{E}_{z \sim \mathcal{N}(0,1)}[\Sigma_T^{1/2}z] \\
    &\leq \sqrt{2 \log(m) \max(\eta^2 T, \lambda_{\textit{max}}(\Sigma_T^*))}
\end{align*}

Where the last step follows from \autoref{lemma:mgf-maximum}. For the divergence penalty, we use Lemma 3.14 in \citet{lee2018analysis} to bound:

\begin{equation*}
    D_{\tilde{M}_t}(G_t, G_{t-1}) \leq \frac{\sqrt{2 \log m}}{\eta_t}
\end{equation*}

From which it follows that the sum can be upper bounded by $\sqrt{2\log m} \cdot \sum_{t=1}^T \frac{1}{\sqrt{2t}} \leq 2\sqrt{T\log m}$, giving us our final regret bound of:

\begin{equation}
    \Bigg[\max\Bigg( \sqrt{2}, ~ \eta \cdot \lambda_{max}\Big(\frac{\Sigma^*_T}{\eta^2T}\Big)^{1/2} \Bigg) +  \sqrt{2} \Bigg] \sqrt{2T\log m}
\end{equation}

\subsection{Heuristic Privacy Analysis of RW-AdaBatch}\label{app:heuristic}

The mixture-distribution method of privacy analysis developed in \citet{wang2024unified} and used in the analysis of RW-AdaBatch leads to very tight bounds, but this comes at the cost of being highly complex and difficult to interpret. To help build some intuition, this appendix derives a heuristic estimate for the asymptotic level of privacy amplification provided by RW-AdaBatch. We make the following simplifying assumptions:
\begin{enumerate}
    \item The variance of the noisy cumulative gain vector is very large, i.e. $t\eta^2 \gg 1$, and so the deterministic component of the gap is insignificant compared to the stochastic component. This allows us to simplify our equations by dividing everything through by $\eta$, and so we will also assume WLOG that $\eta=1$.
    \item Potential batch sizes are relatively small, i.e. $B_t \ll t$, and so we can simplify the target failure probability $\delta_t = \alpha\sqrt{\log n/(t+B_t)} \sim \alpha\sqrt{\log n/t}$.
    \item We approximate the distribution of \textit{containing} batch sizes for time $t$ with the distribution of batch sizes induced by the gap \textit{at} time $t$. We believe this is a reasonable heuristic for sufficiently large $t$ (intuitively, if the gap is large at time $t$ and Assumptions 1 and 2 hold, then we expect the gap at the preceding few time steps to be essentially the same size), and it allows us to sidestep much of the complexity in the derivation of our rigorous lower bounds on privacy amplification presented in \autoref{sec:adabatch}.
    \item Initially, we will assume that $n=2$, and therefore that the gap itself follows a half-normal distribution. After establishing bounds in this setting, we'll return to consider the case of general $n$.
\end{enumerate}

Under these assumptions, consider the event that $t$ falls in a batch of size less than $B+1$. Under Assumption 3, this is equivalent to the gap at time $t$ being smaller than the threshold computed by RW-AdaBatch when it selects batch sizes. We will compute that threshold. In the notation of \autoref{app:thm1-proof}, we have $E=\sqrt{2B\log (2n-2)}$, and by Assumption 2 we have $\delta_t = \alpha\sqrt{\log n/t}$. Plugging these quantities into \autoref{lemma:borell-tis} and letting $k$ represent the size of the gap at time $t$, we get:
\begin{align}
    \exp\Big(\frac{-(k-E)^2}{2B}\Big) 
    &> \alpha \sqrt{\log n/t}
    \\
    (k-E)^2
    &< 2B(\log(1/\alpha) + \frac{1}{2}\log(t/\log n))
    \\
    k 
    &< E + \sqrt{B} \cdot\sqrt{2\log(1/\alpha) + \log(t/\log n)}
\end{align}

Next, under Assumptions 1 and 4, we have that $k = |z|$, where $z \sim \mathcal{N}(0, 2t)$. So, we can upper-bound the probability that our event occurs with the small-ball inequality $\mathbb{P}[|z/\sqrt{2t}| \leq a] \leq \frac{2a}{\sqrt{2\pi}}$, giving us:
\begin{align}
    \mathbb{P}[\text{batch size }< B+1] 
    &= \mathbb{P}\Big[|z/\sqrt{2t}| < \sqrt{\frac{B}{2t}}\big( 
        \sqrt{2\log (2n-2)} + \sqrt{2\log(1/\alpha) + \log(t/\log n)}    
    \big)\Big]
    \\
    &\leq \sqrt{\frac{B}{\pi t}} \Big(\sqrt{2\log (2n-2)} + \sqrt{2\log(1/\alpha) + \log(t/\log n)}\Big)
\end{align}

Finally, if a point falls in a batch of size $B+1$, then its post-hoc privacy level is $\mu' \coloneq\mu/\sqrt{B+1} < \mu/\sqrt{B}$. This means that the expression above can also be interpreted as an upper-bound on the probability that the amplification ratio $r \coloneq \mu/\mu'$ for point $t$ is less than $\sqrt{B}$. Rearranging, we can derive the following bound for the amplification ratio that holds with probability at least $1-\gamma$:
\begin{align}
    r &= \frac{\gamma^2\pi t}{\Big(\sqrt{2\log (2n-2)} + \sqrt{2\log(1/\alpha) + \log(t/\log n)}\Big)^2} \\
    &= O\Big(\frac{\gamma^2t}{\log(1/\alpha) + \log(t)}\Big)
\end{align}

And so in the special case where $n=2$, we see that the quantiles of the privacy amplification provided by RW-AdaBatch grow like $t/\log(t)$. Importantly, however, this bound is only pointwise and not uniform over all quantiles simultaneously.

The case of general $n > 2$ is more complicated because the gap no longer follows a nice, analytically-tractable distribution. However, we can use the fact that $\Phi(x) \leq 1$ to derive a very loose upper-bound on \autoref{eq:gapcdf} by:
\begin{align}
     F_k(\varepsilon;n) 
     &= 1 - n \int_{-\infty}^\infty \varphi(x)\Phi(x-\varepsilon)^{n-1}~dx
     \\
     &\geq 1 - n \int_{-\infty}^\infty \varphi(x) \Phi(x-\varepsilon)~dx
     \\
     &= 1 - \frac{n}{2}(1 - F_k(\varepsilon;2))
\end{align}

This allows us to convert a $1-\gamma$ bound on any monotonic function of the gap when $n=2$ into a $1-n\gamma/2$ bound when $n>2$. Plugging this into the bound on the amplification ratio above, we get:

\begin{equation}
    r = O\Big(\frac{\gamma^2 t}{n^2(\log n + \log(1/\alpha) + \log t)}\Big)
\end{equation}

Showing that the degree of amplification remains almost linear in $t$, but potentially with a slower rate that depends on the number of experts. 

\subsubsection{Limitations of Heuristic Analysis}

Although we believe this analysis is helpful for building intuition, we caution against using it as a replacement for the more rigorous bounds derived in \autoref{sec:adabatch}. 

Firstly, this is because Assumptions 1-3 discard several factors which are asymptotically irrelevant but potentially meaningful at realistic problem sizes. Assumption 3 in particular is difficult to rigorously justify, although it seems to hold reasonably well in practice.

Secondly, the bounds derived above are only meaningful if $\gamma = \omega(1/\sqrt{T})$. This is an issue because there will still often be a meaningful risk of selecting a very small batch size even when $t$ is fairly large --- in particular, this is almost certain to happen every time the leader changes. So while it might be possible to make heuristic statements like ``With probability about 0.9, your post-hoc privacy level will be at least as good as $\mu=0.1$", the risk that we'll get unlucky and observe $\mu=1$ instead makes this sort of guarantee meaningfully weaker than actually offering a privacy guarantee of $\mu=0.1$ from the beginning. 

In the language of $f$-DP, our heuristic should convey a fairly good idea for what the moderate-FPR region of the tradeoff curve looks like while giving an inflated impression of the privacy guarantee in the low-FPR regime. This can be contrasted with the true lower bounds we compute analytically, which are over-pessimistic in the moderate-FPR regime but quite accurate in the (often-more-critical) regime when $\alpha$ is small.

\end{document}